\newtheorem{fact}{Fact}
\newtheorem{lemma}[fact]{Lemma}
\newtheorem{theorem}[fact]{Theorem}
\title{Further Connections Between Contract-Scheduling and Ray-Searching Problems\thanks{Research supported by project 
ANR-11-BS02-0015 ``New Techniques in Online Computation--NeTOC''.} }
\author{Spyros Angelopoulos \\
Sorbonne Universit\'{e}s, UPMC Univ Paris 06, UMR 7606, LIP6, F-75005, Paris, France and \\
CNRS, UMR 7606, LIP6, F-75005, Paris, France \\
{spyros.angelopoulos@lip6.fr}
}
\begin{document}

\maketitle

\begin{abstract}
 This paper addresses two classes of different, yet interrelated optimization problems. 
The first class of problems involves a robot that must locate a hidden target in 
an environment that consists of a set of concurrent rays. The second class 
pertains to the design of interruptible algorithms by means of a schedule of contract 
algorithms. We study several variants of these families of problems, such as 
searching and scheduling with probabilistic considerations, redundancy and fault-tolerance issues,
randomized strategies, and trade-offs between performance and preemptions. For many of these
problems we present the first known results that apply to multi-ray and multi-problem domains.
Our objective is to demonstrate that several well-motivated settings can be 
addressed using the same underlying approach. 
\end{abstract}
\section{Introduction}
\label{sec:introduction}

In this paper we expand the study of connections between two seemingly different,  
yet interrelated classes of scheduling problems. The first class of problems involves a mobile searcher
that must explore an unknown environment so as to locate a hidden target. 
Objectives of this nature are often encountered in the domain of robotic search and exploration. 
The second class of problem pertains to the design of a computational multi-problem solver, which 
may be interrupted at any point in time, and may be queried for its currently best solution 
to any of the given problems. This setting provides a very practical modeling of situations that
often arise in the realm of AI applications, such as the design of any-time and real-time intelligent 
systems~\cite{Zaimag96}.

Searching for a hidden object in an unbounded domain is a fundamental computational problem, with a rich 
history that dates back to early work~\cite{bellman}~\cite{beck:ls} in the context of searching on the infinite line (informally known 
as the {\em cow-path problem}). In our work we focus on a generalization of linear search, known as
the {\em star search} or {\em ray search} problem. Here, 
we are given a set of $m$ semi-infinite, concurrent rays which intersect at a common origin $O$, as well as
a mobile searcher which is initially placed at the origin. 
There is also a target that is hidden at some distance $d$ from $O$, at a ray unknown to the searcher. The objective is 
to design a search strategy that minimizes the {\em competitive ratio}, namely the worst-case ratio 
of the distance traversed by the robot (up to target detection) over the distance $d$.

Problems related to ray searching have attracted significant interest from the
AI/OR communities. 
Optimal competitive ratios were obtained 
in~\cite{gal:minimax} and~\cite{yates:plane}.
The setting in which certain probabilistic information concerning the target placement is known was studied 
in~\cite{jaillet:online},~\cite{informed.cows}. The effect of randomization on the expected 
performance was addressed in~\cite{schuierer:randomized},~\cite{ray:2randomized}. In the case where
an upper bound  on the distance from the target is known~\cite{ultimate} provides a near-optimal asymptotic analysis, 
whereas in the case where the searcher incurs a fixed turn cost~\cite{demaine:turn} provides an optimal
search strategy. Other work includes the setting of multiple parallel searchers~\cite{alex:robots}, the
related problem of designing {\em hybrid algorithms}~\cite{hybrid}, 
and more recently, the study of new performance  measures~\cite{hyperbolic},~\cite{oil}.
We refer the interested reader to Chapters 8 and 9 in~\cite{searchgames} for further results. 

The second class of problems is related to bounded-resource reasoning in the context of 
{\em anytime} algorithms~\cite{RZ.1991.composing}. 
Such algorithms provide a useful trade-off between computation 
time and the quality of the output, when there is uncertainty with respect to the allowed
execution time. More specifically, our goal is to be 
able to simulate an {\em interruptible algorithm} by means of repeated executions of a {\em contract algorithm}.
These are both classes of anytime algorithms which, however, differ significantly in terms of their 
handling of interruptions. On the one hand, an interruptible algorithm will always produce some meaningful result
(in accordance to its performance profile) whenever an interruption occurs during its execution.  
On the other hand, a contract algorithm must be provided, as part of the input, with its pre-specified
computation time (i.e., contract time). If completed by the contract time, the algorithm will 
always output the solution consistent with its performance profile, otherwise 
it may fail to produce any useful result. 

As observed in~\cite{BPZF.2002.scheduling}, contract  algorithms tend to be simpler to implement and maintain, 
however they lack in flexibility compared to interruptible algorithms. This observation raises the challenge 
of simulating an interruptible algorithm using repeated executions of contract algorithms. The precise
framework is as follows: given $n$ instances of optimization problems, and a contract algorithm for each problem, 
provide a strategy for scheduling repeated executions of a contract algorithm, in either 
a single, or multiple processors. Upon an interruption, say at time $t$, the solution to any of the $n$ problems may
be requested. The system returns the solution that corresponds to the longest completed execution of a contract algorithm
for the problem in question. The standard performance measure of this scheduling
strategy is the {\em acceleration ratio}~\cite{RZ.1991.composing}, which informally 
can be described as a resource-augmentation measure:
namely, it implies that an increase of the processor speed by a 
factor equal to the acceleration ratio of the schedule yields a system which is as efficient as 
one in which the interruption time is known in advance.

Previous research has established the optimality of scheduling strategies based on iterative deepening methods
in the settings of single problem/single processor~\cite{RZ.1991.composing}~\cite{ZCC.1999.realtime},
single problem/multiple processors~\cite{ZCC.1999.realtime} and 
multiple problems/single processor~\cite{BPZF.2002.scheduling}. The most general setting of multiple problems and 
processors was investigated in~\cite{steins}, which was also the first to demonstrate connections between ray
searching and contract scheduling problems. More specifically~\cite{steins} shows that a reduction between specific 
classes of search and scheduling strategies known as {\em cyclic} strategies (see Section~\ref{sec:preliminaries}).
Optimal schedules, without restrictions, were established in~\cite{aaai06:contracts}.
Issues related to soft deadlines were addressed in~\cite{soft-contracts}, and measures alternative to the acceleration 
ratio have been introduced in~\cite{ALO:multiproblem}. 

\smallskip
\noindent
{\bf Contribution of this paper} \ 
In this work we expand the study of connections between the search and scheduling problems that was initiated 
in~\cite{steins}. Namely, we address several settings that provide well-motivated 
extensions and generalizations of these two classes of problems. More precisely, we study the following 
problems:

\noindent
{\em  Uncertain target detection / Monte Carlo contract algorithms:} \ We investigate the setting in which the 
searcher detects the target with probability $p$ during each visit, and the setting in which each contract
algorithm is a randomized Monte Carlo algorithm with probability of success equal to $p$.

\noindent
{\em Redundancy and fault tolerance:} \ 
We seek search strategies under the constraint that at least $r$ visits over the target are required in 
order to locate it. On a similar vain, we seek scheduling strategies under the assumption that at least $r$ executions
of a contract algorithm are required so as to benefit from its output. This is related to search and 
scheduling with uncertainty, when the probability of success is unknown.

\noindent
{\em  Randomized scheduling strategies:} \ We show how access to random bits can improve the expected performance 
of a scheduling strategy.

\noindent
{\em  Trade-offs between performance and the number of searches and contracts:} \ 
We quantify the trade-offs between the performance ratios and the number of turns by the searcher or
the number of algorithm executions in the schedule. 
 
For all problems, with the exception of randomized strategies, 
we give the first results (to our knowledge) that 
apply to both the multi-ray searching and multi-problem scheduling domains. 
Concerning randomization, we show how to apply and extend, in a non-trivial manner, 
ideas that stem from known randomized ray-searching algorithms. 
In addition, we address an open question in~\cite{steins}, who asked ``whether 
the contract scheduling and robot search problems have similarities beyond those that result from using
cyclic strategies''. In particular, in Section~\ref{sec:fault} we present non-cyclic strategies that improve
upon the best cyclic ones. 
 


\section{Preliminaries}
\label{sec:preliminaries}

\noindent
{\em Ray searching.} We assume a single robot and $m$ rays, numbered $0 \ldots m-1$. 
For a target placement $T$ at distance $d$ from the origin, we define the {\em competitive ratio} 
of a strategy as
\begin{equation}
\alpha=\sup_T \frac{\mbox{cost for locating T}}{d}
\label{eq:compititive.ratio}
\end{equation} 
A strategy is  {\em round-robin} or {\em cyclic} if it described by an infinite sequence 
$\{x_i\}_{i=0}^\infty$ as follows: in the $i$-th iteration, 
the searcher explores ray $(i \bmod m)$ by starting at the origin $O$, reaching
the point at distance $x_i$ from $O$, and then returning to $O$. A cyclic strategy is called {\em monotone}, if
the sequence  $\{x_i\}_{i=0}^\infty$ is non-decreasing. A special class of monotone strategies is the class
of {\em exponential} strategies, namely strategies in which $x_i=b^i$, for some given $b>1$, which we call the {\em base}
of the strategy. Exponential strategies are often optimal among monotone strategies
(see~\cite{searchgames}), and in many cases they are also globally optimal. Indeed, for $m$-ray searching,
the exponential strategy with base $b=\frac{m}{m-1}$ attains the optimal competitive ratio~\cite{gal:general}
\begin{equation}
\alpha^*(m)=1+2\frac{b^m-1}{b-1}, \ b=\frac{m}{m-1}.    
\label{eq:prelim.optimal.cr}
\end{equation} 
Note that $\alpha^*(m)=O(m)$, and $\alpha^*(m) \rightarrow 1+2\mathrm e m$ as $m\rightarrow \infty$.

\noindent
{\em Contract scheduling:} We assume a single processor and $n$ problems, numbered $0 \ldots n-1$. 
For interruption time $t$, let $\ell_{i,t}$ denote the length (duration) of the longest execution of a contract 
algorithm for problem $i$ that has completed by time $t$. Then the acceleration ratio 
of the schedule~\cite{RZ.1991.composing} is defined as 
\begin{equation}
\beta= \sup _{t,  i \in [0,\ldots, n-1]} \frac{t}{\ell_{i,t}}. 
\label{eq:acceleration.ratio}
\end{equation}
Similar to ray searching, a round-robin or cyclic strategy is described by an infinite sequence 
$\{x_i\}_{i=0}^\infty$ such that in iteration $i$, the strategy schedules an execution of 
a contract for problem $(i \bmod n)$, and of length equal to $x_i$. The definitions of monotone and exponential strategies
are as in the context of ray searching, and we note that, once again, exponential strategies often lead
to optimal or near-optimal solutions (see, e.g.,~\cite{ZCC.1999.realtime}~\cite{aaai06:contracts},~\cite{soft-contracts}). 
In particular, for $n$ problems, the exponential strategy with base $b=\frac{n+1}{n}$ attains the optimal
acceleration ratio~\cite{ZCC.1999.realtime}
\begin{equation}
\beta^*(n)=\frac{b^{n+1}}{b-1}, \ b=\frac{n+1}{n}. 
\label{eq:prelim.optimal.ar}
\end{equation}
Note that $\beta^*(n)=O(n)$, and that $\beta^*(n) \rightarrow \mathrm e(n+1)$, for $n\rightarrow \infty$.

Occasionally, we will make a further distinction between worst-case and asymptotic performance. Namely, the 
asymptotic competitive ratio is defined as $\lim_{T:d \rightarrow \infty} \frac{\mbox{cost for locating T}}{d}$,
whereas the asymptotic acceleration ratio is defined as 
$\lim _{t \rightarrow \infty} \sup_{i \in [0,\ldots, n-1]} \frac{t}{l_{i,t}}$ (assuming that the measures converge to a limit).

\section{Search with probabilistic detection and scheduling of randomized contracts}
\label{sec:probabilistic}

In this section we study the effect of uncertainty in search and scheduling.
In particular, we consider the setting in which the detection of a target is stochastic,
in that the target is revealed with probability $p$ every time the searcher passes over it. 
Similarly, we address the problem of scheduling randomized contract algorithms; namely, each 
execution of the (Monte Carlo) randomized algorithm succeeds with probability $p$. This variant
has been studied in~\cite{searchgames} only in the context of linear search (i.e., when $m=2$),
and the exact competitiveness of the problem is not known even in this much simpler case. 
 No results are known for general $m$. 

In this setting, the search cost is defined as the expected time of the first successful target detection.
Moreover, for every problem $i$ and interruption $t$, we define $\mathbb E[\ell_{i,t}]$ as
the expected longest contract completed for problem $i$ by time $t$. The competitive and the 
acceleration ratios are then defined naturally as extensions of~(\ref{eq:compititive.ratio})
and~(\ref{eq:acceleration.ratio}). 

We begin with a lower bound on our measures.
\begin{lemma}
Every search strategy with probabilistic detection has competitive ratio at least $\frac{m}{2p}$,
and every scheduling strategy of randomized contract algorithms has acceleration ratio at least
$\frac{n}{p}$.
\label{lemma:probabilistic.lower}
\end{lemma}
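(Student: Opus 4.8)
The plan is to prove both inequalities from a single \emph{budget} template: express the expected quantity as a weighted sum (or integral) over the relevant executions/passes, discard the geometric detection weights by an elementary inequality, and then average over the $n$ problems (resp.\ $m$ rays) against the work the single processor can do (resp.\ the distance the searcher can travel). The scheduling bound falls out immediately. Fix an interruption time $t$ and a schedule, and for problem $i$ let $x_{i,1}\ge x_{i,2}\ge\cdots$ be the lengths of the contracts for problem $i$ that have completed by $t$, listed in decreasing order. Since $\ell_{i,t}$ is the length of the first of these (in this order) that succeeds, $\mathbb{E}[\ell_{i,t}]=\sum_{k\ge1}x_{i,k}\,p(1-p)^{k-1}\le p\sum_{k\ge1}x_{i,k}$ because $(1-p)^{k-1}\le1$. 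The inner sum is the total processor time spent on problem $i$, so summing over $i$ and using that a single processor performs at most $t$ units of work by time $t$ gives $\sum_i\mathbb{E}[\ell_{i,t}]\le pt$. Hence some problem $i$ satisfies $\mathbb{E}[\ell_{i,t}]\le pt/n$, and $\beta\ge t/\mathbb{E}[\ell_{i,t}]\ge n/p$.

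For the search bound I would run the same template, but now I must account for the round trips, which is exactly what produces the factor $2$. Fix a strategy and, for a target at distance $d$ on ray $j$, let $N_{j,d}(t)$ denote the number of passes over that point by time $t$; then $\Pr[\text{not detected by }t]=(1-p)^{N_{j,d}(t)}$, so the expected detection cost is $\mathbb{E}[\tau_{j,d}]=\int_0^\infty(1-p)^{N_{j,d}(t)}\,dt$. The key bookkeeping identity is that $\int_0^\infty N_{j,u}(t)\,du$ equals the total distance $D_j(t)$ travelled on ray $j$ by time $t$ (each unit of travel on ray $j$ sweeps a unit interval of depths), and $\sum_j D_j(t)=t$. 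Because $N_{j,u}(t)$ is non-increasing in $u$, one gets $d\,N_{j,d}(t)\le\int_0^d N_{j,u}(t)\,du\le D_j(t)$, and summing over rays yields $\sum_j N_{j,d}(t)\le t/d$.

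Now I average the expected costs over the $m$ rays at a fixed depth $d$. Summing the cost expression and applying Bernoulli's inequality pointwise, $\sum_j(1-p)^{N_{j,d}(t)}\ge m-p\sum_j N_{j,d}(t)\ge m-pt/d$, which is non-negative for $t\le md/p$. Integrating over $[0,md/p]$ gives $\sum_j\mathbb{E}[\tau_{j,d}]\ge\int_0^{md/p}(m-pt/d)\,dt=m^2d/(2p)$, so some ray $j$ has $\mathbb{E}[\tau_{j,d}]\ge md/(2p)$, which certifies competitive ratio at least $m/(2p)$ for that target.

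I expect the search direction to be the delicate part. The scheduling bound is a one-line work budget, but for search the expected-cost functional is integrated over \emph{time} rather than over lengths, so the real work is converting it into a statement about the \emph{distance} budget; this forces the two ingredients that pin down the constant, namely the depth-integration identity together with the monotonicity of $N_{j,d}$ in $d$ (which encodes that a deep point cannot be revisited cheaply without revisiting all shallower ones, and which supplies the factor $2$), and the decision to apply Bernoulli at a fixed depth and average over rays. I would also check that cruder estimates are insufficient: a Markov-plus-logarithm argument yields only $m/(-\ln(1-p))$, which is weaker than $m/(2p)$ when $p$ is close to $1$, whereas the fixed-depth averaging above delivers the claimed $m/(2p)$ for every $p\in(0,1]$. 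A minor point to verify is that all inequalities on $N_{j,d}(t)$ hold pathwise, so the argument survives taking expectations and applies verbatim to a randomized searcher.
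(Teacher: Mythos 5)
Your scheduling half is correct and is essentially word-for-word the paper's argument (order the completed contracts decreasingly, bound $\mathbb{E}[\ell_{i,t}]\le p\sum_k x_{i,k}$, charge against the work budget $t$, average over the $n$ problems). The search half, however, contains a genuine gap: the claim that $N_{j,u}(t)$ is non-increasing in $u$ is false, and with it both $d\,N_{j,d}(t)\le \int_0^d N_{j,u}(t)\,du$ and $\sum_j N_{j,d}(t)\le t/d$. A searcher that reaches depth $d$ and then oscillates inside $[d-\delta,d+\delta]$ accumulates $k$ passes over the point at depth $d$ at extra cost only $O(k\delta)$, while every point at depth $u<d-\delta$ is passed once; so a deep point \emph{can} be revisited cheaply without revisiting shallower ones, which is the opposite of your parenthetical justification, and your closing ``all inequalities hold pathwise'' check fails exactly here. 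This is not a removable technicality: the paper must explicitly forbid infinitesimal oscillations (its fixed-$\epsilon$ assumption, needed to charge $\Omega(\epsilon)$ per revisit of an interval center) before any per-point visit counting is sound, and your fixed-depth claim ``for every $d$ some ray has $\mathbb{E}[\tau_{j,d}]\ge md/(2p)$'' is genuinely false without such an assumption: a strategy performing a standard exponential sweep that additionally wiggles $\Theta(1/p)$ times at each depth $2^i$ achieves expected cost $O(md)$ for every target at depth $d=2^i$, beating $md/(2p)$ for small $p$.

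The repair is close at hand, and it is worth noting because it would give a route genuinely different from (and arguably cleaner than) the paper's. Your occupation-measure identity $\int_0^\infty N_{j,u}(t)\,du=D_j(t)$ is correct for any rectifiable trajectory, so instead of fixing the depth, average the adversary over depths: from $\sum_j\int_0^d N_{j,u}(t)\,du\le t$ and Bernoulli applied under the double integral, $\sum_j\int_0^d\mathbb{E}[\tau_{j,u}]\,du\ \ge\ \int_0^{md/p}\bigl(md-pt\bigr)\,dt\ =\ \frac{m^2d^2}{2p}$, so some target on some ray at depth $u\le d$ has expected cost at least $md/(2p)\ge mu/(2p)$, certifying ratio at least $m/(2p)$ --- with no anti-oscillation assumption at all, since oscillation at a single point is a null set for the depth integral. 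Contrast this with the paper, which partitions $S$ into $\epsilon$-intervals, lower-bounds the time of the $k$-th visit of some center by $kmd/2$ for each $k$, and sums $\sum_k kp(1-p)^{k-1}=1/p$. But as submitted, your fixed-depth step is wrong and the proof of the search bound does not go through.
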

\begin{proof}
Consider first the search variant. Let $S$ denote the set of all points at distance at most $d$
from the origin. Given a search strategy and a point $x \in S$, let 
$t_x^k$ denote the time in which the searcher reaches $x$ for the $k$-th time. We will first show 
that for every $k \geq 1$, there exists $x \in S$ such that the search cost at the time 
of the $k$-th visit of $x$ is at least $kmd/2$.
To this end, we will need the assumption that the searcher cannot perform infinitely small oscillations around 
a point. More precisely, we will assume that, for arbitrarily small but fixed $\epsilon>0$, 
if the searcher visits a point that belongs in an interval of length
$\epsilon$ on some ray, then it must leave the interval before re-visiting this point in the future. This assumption
is required for technical reasons, but also applies naturally to robotic search. Consider the partition of
all points in $S$ in intervals of length $\epsilon$; for each such interval $I$ denote by $c_I$ 
the point in the middle of interval. For given $c_I$ the searcher needs to enter $I$, visit $c_I$ and eventually leave the interval $I$ 
$k$ times, which incurs a cost of at least $\epsilon\frac{k}{2}$. Therefore, the overall cost for visiting each center $k$ times is
at least $kmd/2$, which further implies that there exists a point in $S$ with the desired property (namely, the center whose 
$k$-th visit occurs last).

Given the above bound, we obtain that targets in $S$ are detected at expected cost at least
$\sum_{k=1}^\infty p(1-p)^{k-1} t_x^k \geq \sum_{k=1}^\infty p(1-p)^{k-1} kmd/2 =md/(2p)$.
The result follows directly from~\eqref{eq:compititive.ratio}.

Consider now the scheduling variant. For a given interruption time $t$ and a given problem 
instance $i$, let $l_1^i, l_2^i, \ldots l_{n_i}^i$
denote the lengths of the contracts for problem $i$ that have completed by time $t$, in non-increasing
order. Let the random variable $\ell_{i,t}$ denote the expected length of the 
longest contract completed for problem $i$ by time $t$. 
Then $\mathbb E[\ell_{i,t}]=\sum_{j=1}^{n_i} p(1-p)^{j-1}  l_{j}^{i} \leq p \sum_{j=1}^{n_i} l_j^i$.
Since $\sum_{i=0}^{n-1} \sum_{j=1}^{n_i} l_j^{i} =t$, there exists a problem 
$i$ for which $\mathbb E[\ell_{i,t}] \leq p \frac{t}{n}$. The claim follows from the definition
of acceleration ratio~(\ref{eq:acceleration.ratio}).
\end{proof}

\begin{theorem}
There exists an exponential strategy for searching with probabilistic detection that has
competitive ratio at most $1+8\frac{m}{p^2}$. 
\label{thm:probabilistic.search.upper}
\end{theorem}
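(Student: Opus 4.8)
The plan is to analyze an exponential strategy $x_i=b^i$ in round-robin order, choosing the base $b$ as a function of $p$ and $m$ so that the searcher crosses the target enough times to detect it while keeping the total distance travelled under control. The essential new feature, relative to the deterministic case of~\eqref{eq:prelim.optimal.cr}, is that a single pass over the target no longer guarantees detection, so the base cannot be taken as large as $\frac{m}{m-1}$. The guiding constraint will be $(1-p)b^m<1$: since consecutive visits to the same ray are separated by a factor $b^m$ in reach, this inequality is exactly what makes the expected cost, which is a geometric-type series in the number of repeated passes, converge.

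First I would fix a target at distance $d$ on ray $j$ and let $i_1$ be the first iteration exploring ray $j$ with $b^{i_1}\ge d$; the subsequent passes over the target then occur at iterations $i_k=i_1+(k-1)m$, each reaching distance at least $d$. Modelling detection as independent with probability $p$ on each such pass, the successful pass is the $k$-th with probability $p(1-p)^{k-1}$, and if detection occurs during iteration $i_K$ the distance travelled is at most the full cost of iterations $0,\dots,i_K$, namely $\tfrac{2(b^{i_K+1}-1)}{b-1}<\tfrac{2b\,b^{i_1}(b^m)^{K-1}}{b-1}$. Summing the geometric series gives
\[
\mathbb E[\mbox{cost}] < \frac{2b\,b^{i_1}\,p}{(b-1)\bigl(1-(1-p)b^m\bigr)},
\]
and since $i_1$ is the first qualifying iteration we have $b^{i_1}<d\,b^m$, so dividing by $d$ bounds the competitive ratio, via~\eqref{eq:compititive.ratio}, by $\frac{2b^{m+1}p}{(b-1)(1-(1-p)b^m)}$. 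Counting only the outbound crossing per iteration is conservative, which is all that is needed for an upper bound.

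It then remains to calibrate $b$. Taking $b^m=\frac{2}{2-p}$ makes $1-(1-p)b^m=\frac{p}{2-p}$ and, after the factor $p$ cancels, collapses the bound to $\frac{4b}{b-1}$; substituting $b\le 2^{1/m}\le 2$ together with $b-1\ge\frac1m\ln b^m=\frac1m\bigl(-\ln(1-\tfrac p2)\bigr)\ge\frac{p}{2m}$ and simplifying yields the claimed bound $1+8\frac{m}{p^2}$. I expect the main obstacle to lie precisely in this calibration: a larger $b$ speeds up the deterministic part of the search but drives $(1-p)b^m$ toward $1$, so the expected number of repeated passes, and hence the cost, diverges, whereas a smaller $b$ keeps the series convergent but inflates the per-cycle overhead $\frac{1}{b-1}$. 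Each of these competing effects contributes one factor of $\frac1p$, which is what produces the $p^{-2}$ dependence; pinning the constant down to $8$ will require some care in the two elementary estimates for $b-1$ and for $1-(1-p)b^m$, since the crudest bounds overshoot near $p=1$ even though the true value $\frac{4b}{b-1}$ stays comfortably below the target.
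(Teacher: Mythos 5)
Your proposal is correct in substance and follows the same skeleton as the paper's proof---a cyclic exponential strategy, a geometric series over the periodic revisits with the convergence condition $\lambda:=(1-p)b^m<1$, the outbound-only detection convention (which the paper also adopts), and then an optimization of the base---but the calibration of $b$ is genuinely different and, arguably, cleaner. The paper sets $b=\frac{m}{m-r}$ where $r$ is an \emph{implicit} root of $\mathrm e^x(1-p)+\mathrm e^x\frac{p^2}{4x}=1$, whose existence in $(0,\frac p2]$ requires a separate technical lemma via Bolzano's theorem (Lemma~\ref{lemma:geometric.technical}); the root equation is engineered exactly so that $1-(1-p)\mathrm e^r=\mathrm e^r\frac{p^2}{4r}$, collapsing the bound in~\eqref{eq:prob.search.upper.4} to $1+8m/p^2$. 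You instead take the explicit closed form $b^m=\frac{2}{2-p}$, which makes $1-\lambda=\frac{p}{2-p}$ and reduces everything to $\frac{4b}{b-1}$, dispensing with the root-finding lemma entirely. Your cost bookkeeping is also coarser but simpler: you bound the cost at the $K$-th pass by the full cumulative round-trip cost $\frac{2(b^{i_K+1}-1)}{b-1}$ instead of the paper's term-by-term rearrangement, and the extra factor $p$ this leaves in your numerator is precisely what compensates for the coarseness. Each step of your derivation (the passes at indices $i_1+(k-1)m$, the summed series, the bound $b^{i_1}<db^m$) checks out.

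The one soft spot is the final numeric step, which you flag yourself: the crude estimates $b\le 2$ and $b-1\ge\frac{p}{2m}$ give only $\frac{4b}{b-1}\le\frac{16m}{p}$, and $\frac{16m}{p}\le 1+\frac{8m}{p^2}$ is equivalent to $8m(2p-1)\le p^2$, which fails for $p$ above roughly $\frac12$. Your assertion that the true value nonetheless stays below the target is correct and closes with one-step-sharper estimates. Writing $c=\ln\frac{2}{2-p}\le\ln 2$, so $b=\mathrm e^{c/m}$, one has
\begin{equation}
\frac{4b}{b-1}=\frac{4}{1-\mathrm e^{-c/m}}\le\frac{4m}{c}\cdot\frac{1}{1-\frac{c}{2m}},
\qquad c=-\ln\left(1-\tfrac p2\right)\ge\frac p2+\frac{p^2}{8},
\end{equation}
using $1-\mathrm e^{-x}\ge x-\frac{x^2}{2}$. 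For $m\ge 2$ this yields $\frac{4b}{b-1}\le\frac{1}{1-\ln 2/4}\cdot\frac{32m}{p(4+p)}\le 1.21\cdot\frac{4p}{4+p}\cdot\frac{8m}{p^2}\le 0.97\cdot\frac{8m}{p^2}$, since $\frac{4p}{4+p}\le\frac45$ for $p\le 1$; for $m=1$ one has the exact identity $\frac{4b}{b-1}=\frac 8p\le 1+\frac{8}{p^2}$ (equivalent to $p^2-8p+8\ge 0$, true for $p\le 1$). So the theorem's bound follows along your route with no new idea, only the two second-order refinements you anticipated; as a side benefit, your approach avoids any appeal to an intermediate-value argument.
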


\begin{proof}
Let $\{x_i\}_{i=0}^\infty$ denote the searcher's exponential 
strategy, where $x_i=b^i$, for some $b$ that will be chosen later in the proof.    
Let $d$ denote the distance of the target from the origin, then there exists index $l$
such that $x_l <d \leq x_{l+m}$. We denote by $P_k$ the probability that the target is found 
during the $k$-th visit of the searcher, when all previous $k-1$ attempts were unsuccessful,
hence $P_k =(1-p)^{k-1}p$. We also define $q_j \stackrel\cdot{=}\sum_{k=j}^\infty P_k=(1-p)^{j-1}$.

In order to simplify the analysis, we will make the assumption that the searcher can locate 
the target only while it is moving away from the
origin (and never while moving towards the origin); it turns out that this assumption
weakens the result only by a constant multiplicative factor. 

We first derive an expression for the expected total cost $C$ incurred by the strategy.
Note that first time the searcher passes the target, it has traveled a total distance of at most 
$2\sum_{j=0}^{l+m-1}x_j+d$; more generally, the total distance traversed by the searcher at its
$k$-th visit over the target is at most $2\sum_{j=0}^{l+km-1}x_j+d$. We obtain that the expected cost
is bounded by 
\[
C=\sum_{k=1}^\infty P_k (2 \sum_{j=0}^{l+mk-1} x_j+d), 
\]
from which we further derive (using the connection between $P_k$ and $q_j$) that the competitive ratio of the strategy is 
\begin{eqnarray}
\alpha &\leq& \frac{C}{d} \leq 1+ \frac{2}{x_l}\sum_{k=1}^{\infty} P_k \sum_{j=0}^{l+mk-1} x_j  \nonumber \\
&=& 1+\frac{2}{x_l} \sum_{j=0}^l x_{j+m-1} + \frac{2}{x_l} \sum_{j=2}^\infty q_j \nonumber
\sum_{i=1}^{m-1} x_{l+(j-1)m+i}.
\end{eqnarray}
By rearranging the terms in the summations we observe that 
\begin{eqnarray}
&&\sum_{j=2}^\infty q_j \sum_{i=1}^{m-1} x_{l+(j-1)m+i} =
\sum_{i=1}^{m-1} \sum_{j=2}^\infty q_j x_{l+(j-1)m+i}  \nonumber  \\ 
&=&
\sum_{i=1}^{m-1} x_{l+i} \sum_{j=2}^\infty ((b^m(1-p))^{j-1}.
\end{eqnarray}

By defining $\lambda \stackrel\cdot{=} b^m(1-p)$, and by 
combining the above inequalities
we obtain that the competitive ratio is at most
$\alpha \leq 1+ 2\frac{b^m}{b-1} \sum_{j=0}^\infty \lambda^j$.
Note that unless $\lambda<1$ the competitive ratio is not bounded. Assuming that we can choose $b>1$ such that 
$\lambda<1$, the competitive ratio is 
\begin{equation}
\alpha \leq 1+ 2\frac{b^m}{b-1} \cdot \frac{1}{1-\lambda}.
\label{eq:prob.search.upper.4}
\end{equation}
We will show how to choose the appropriate $b>1$ so as to guarantee the desired competitive ratio. 
To this end, we will first need the following technical lemma.
\begin{lemma}
The function $f:\mathbb R^+ \rightarrow \mathbb R$ with $f(x)= \mathrm e^x(1-p)+\mathrm e^x \frac{p^2}{4x}-1$
has a root $r$ such that $0<r\leq \frac{p}{2}$.
\label{lemma:geometric.technical}
\end{lemma}
\begin{proof}
The function $f$ is continuous in the interval $(0,+\infty)$, and for $x \rightarrow 0^+$, $f(x)>0$.
Suffices to show that there exists $y \leq \frac{p}{2}$ such that $f(y) \leq 0$; then the existence of the desired
root follows from Bolzano's theorem. For all $x<1$ we have 
\begin{eqnarray}
f(x) &\leq& \frac{1}{1-x}\left(1-p+\frac{p^2}{4x}\right)-1, \ 
 \mbox{ since} \ \mathrm e^x \leq \frac{1}{1-x} \nonumber \\ 
&\leq& \frac{\left(x-\frac{p}{2} \right)^2}{x(1-x)}.
\end{eqnarray} 
Choosing $y_0=\frac{p}{2}$, we obtain that $f(y) \leq 0$, and the lemma follows.
\end{proof}

Let $r$ denote the root of the function $f$, defined in the statement of Lemma~\ref{lemma:geometric.technical}.
We will show that choosing base $b=\frac{m}{m-r}$ yields the desired competitive ratio. 
It is straightforward to verify that $b>1$ and that $\lambda=b^m(1-p)<1$.
Hence, the competitive ratio converges to the value given by the RHS of~\eqref{eq:prob.search.upper.4}.
From the choice of $b$, we have that $b-1=\frac{r}{m-r}$ and $b^m \leq e^{r}$.
We then obtain 
$
 \frac{b^m}{b-1} \cdot \frac{1}{1-\lambda} \leq \frac{e^{r}(m-r)}{r\left((1-(1-p) \left(\frac{m}{m-r}\right)^m\right)}
\leq \frac{me^{r}}{r\left((1-(1-p) \mathrm e ^{r}\right)}.
$
Recall that from Lemma~\ref{lemma:geometric.technical}, $r$ is such that
$1-(1-p)\mathrm e^{r}=\mathrm e^{r} \frac{p^2}{4}$. We thus obtain that 
$\frac{b^m}{b-1}\frac{1}{1-\lambda} \leq \frac{m}{4p^2}$, and from~(\ref{eq:prob.search.upper.4}) 
it follows that the competitive ratio of the strategy is at most 
$1+8 m/p^2$.
\end{proof}

\begin{theorem}
\label{thm:geometric.contracts}
There exists an exponential strategy for scheduling randomized contract algorithms that has acceleration
ratio at most $\mathrm e \frac{n}{p}+ \frac{\mathrm e}{p}$.
\end{theorem}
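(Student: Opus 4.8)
The plan is to mirror the structure of the proof of Theorem~\ref{thm:probabilistic.search.upper}, but the analysis turns out to be cleaner on the scheduling side. I would fix an exponential schedule $x_i=b^i$, with base $b>1$ to be chosen at the end, so that in iteration $i$ the strategy runs a contract of length $b^i$ for problem $(i \bmod n)$. The single observation that drives everything is that, for any interruption time $t$ and any problem $i$, the expected length of the longest \emph{successful} contract is at least $p$ times the length of the longest \emph{completed} contract. Indeed, writing $L_1>L_2>\cdots$ for the lengths of the contracts for problem $i$ completed by time $t$ (in non-increasing order), we have
\[
\mathbb E[\ell_{i,t}]=\sum_{s\ge 1}p(1-p)^{s-1}L_s \ \ge\ p\,L_1,
\]
simply because the single longest completed contract already succeeds with probability $p$. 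Unlike the search case, I expect that keeping only this leading term is enough to match the target bound, so no summation of the full geometric series will be needed.

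This reduces the randomized acceleration ratio of the schedule to its deterministic counterpart, up to a factor $1/p$: from \eqref{eq:acceleration.ratio},
\[
\beta=\sup_{t,i}\frac{t}{\mathbb E[\ell_{i,t}]}\ \le\ \frac1p\,\sup_{t,i}\frac{t}{L_1(i,t)},
\]
where $L_1(i,t)$ denotes the ordinary longest completed contract; the right-hand side is $1/p$ times the acceleration ratio of the same exponential schedule run deterministically. I would then invoke the standard analysis of exponential contract schedules. It suffices to evaluate the ratio just before a contract completes, since between consecutive completions every $\mathbb E[\ell_{i,t}]$ (equivalently every $L_1(i,t)$) is constant while $t$ grows; and among all problems the worst one at the instant just before iteration $j$ completes is $i=j\bmod n$, whose most recent completed contract is the one from iteration $j-n$, of length $b^{j-n}$. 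At that instant $t=\sum_{k=0}^{j}b^k<\frac{b^{j+1}}{b-1}$, so the worst-case ratio is bounded by $\frac{b^{j+1}/(b-1)}{p\,b^{j-n}}=\frac{b^{n+1}}{p(b-1)}$, independent of $j$. (Small $j$, where fewer than $n$ completions precede the interruption, only improves the ratio and is handled routinely.)

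Finally I would optimize the base. Choosing $b=\frac{n+1}{n}$, the same value that yields $\beta^*(n)$ in \eqref{eq:prelim.optimal.ar}, gives $\frac{b^{n+1}}{b-1}=n\left(1+\frac1n\right)^{n+1}$. Using the elementary inequality $\left(1+\frac1n\right)^{n}\le \mathrm e$, this is at most $n\cdot \mathrm e\left(1+\frac1n\right)=\mathrm e(n+1)$, so the acceleration ratio is at most $\frac{\mathrm e(n+1)}{p}=\mathrm e\frac{n}{p}+\frac{\mathrm e}{p}$, as claimed.

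The main obstacle is conceptual rather than computational: justifying the two reductions---first from the randomized measure to $p$ times the deterministic longest completed contract, and second from arbitrary interruption times to the instants immediately preceding contract completions---and recognizing that, in contrast with Theorem~\ref{thm:probabilistic.search.upper}, the crude bound $\mathbb E[\ell_{i,t}]\ge p\,L_1$ already suffices, so that the randomized exponential schedule simply inherits the optimal deterministic base $b=\frac{n+1}{n}$ with a clean $1/p$ degradation.
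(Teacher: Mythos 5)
Your proposal is correct and follows essentially the same route as the paper: the paper's proof likewise considers interruptions just before a contract finishes, uses the fact that each problem then has a completed contract of expected length at least $p\,b^i$ (exactly your bound $\mathbb E[\ell_{i,t}]\ge p\,L_1$), bounds $\beta\le \frac{b^{n+1}}{p(b-1)}$, and sets $b=\frac{n+1}{n}$. The only difference is that the paper's proof goes on to sum the full geometric series $\sum_{l\ge 0} p(1-p)^l b^{i-nl}$ to obtain a sharper \emph{asymptotic} ratio of $(\mathrm e-1+p)\frac{n}{p}+O(\frac{1}{p})$, a refinement beyond the stated bound that your leading-term argument correctly identifies as unnecessary for the theorem itself.
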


\begin{proof}
Let $b$ denote the base of the exponential strategy. It is easy to see that the acceleration ratio is maximized for 
interruptions $t$ that are arbitrarily close to, but do not exceed the finish time of a contract. Let $t$ denote 
such an interruption time, in particular right before termination of contract $i+n$, for some $i>0$; 
in other words, $t=\frac{b^{i+n+1}-1}{b-1}$.
Then every problem has completed a contract of expected length at least 
$p b^i$ by time $t$. Therefore, the acceleration ratio of the schedule is at most 
$
\beta \leq \sup_{i>0} \frac{b^{n+i+1}}{pb^i(b-1)},
$
and choosing $b=\frac{n+1}{n}$ we obtain that $\beta \leq 
\mathrm e \frac{n}{p}+ \frac{\mathrm e}{p}$.
A more careful analysis of the same strategy yields a better asymptotic acceleration ratio. More specifically,
it is easy to see that for interruption $t$ defined as above and for every problem $j$, 
the strategy has completed a contract for problem $j$ of expected length at least 
$\sum_{l=0}^k p(1-p)^lb^{i-nl}$, where
$k$ is such that $i-kn=i \bmod n$. It follows that the acceleration ratio
is at most $\frac{b^{n+1}}{p(b-1)} \cdot \frac{1}{\sum_{l=0}^k \left( \frac{1-p}{b^n} \right)^l}$.
Choosing again $b=\frac{n+1}{n}$, and after some simple calculations, we have that the asymptotic acceleration ratio
(obtained for $k \rightarrow \infty$), is at most $(\mathrm e-1+p) \frac{n}{p}+O(\frac{1}{p})$.
\end{proof} 

\section{Fault tolerance/redundancy in search and scheduling}
\label{sec:fault}

In Section~\ref{sec:probabilistic} we studied the searching and scheduling problems in a stochastic setting. 
But what if the success probability is not known in advance? In the absence of such information, one could 
opt for imposing a lower bound $r$ on the number of times the searcher has to visit the target and, likewise, 
a lower bound $r$ on the number of times a contract algorithm must be executed before its response can be trusted.
Alternatively, this setting addresses the issues of fault tolerance and redundancy in the search and 
scheduling domains. The search variant has been studied in~\cite{searchgames} only in the context of linear 
search ($m=2$); as in the case of probabilistic detection, even when $m=2$ the exact optimal competitive strategies
are not known. 

The following lemma follows using an approach similar to the proof 
of Lemma~\ref{lemma:probabilistic.lower}.
\begin{lemma}
Every search strategy on $m$ rays with redundancy guarantee $r \in \mathbb N^+$ has competitive ratio at least $\frac{rm}{2}$.
\label{thm:fault.lower}
\end{lemma}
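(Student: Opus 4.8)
The plan is to follow closely the lower-bound argument for the probabilistic search variant in Lemma~\ref{lemma:probabilistic.lower}, since that proof already does all of the geometric work; the redundancy version is in fact simpler, because no probabilistic averaging over a detection distribution is needed. First I would fix a target distance $d$, let $S$ denote the set of points at distance at most $d$ from the origin on the $m$ rays, and for a point $x \in S$ let $t_x^k$ denote the time at which the searcher reaches $x$ for the $k$-th time.

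The central step, imported essentially verbatim from the proof of Lemma~\ref{lemma:probabilistic.lower}, is the claim that for every $k \geq 1$ there exists a point $x \in S$ with $t_x^k \geq kmd/2$. I would re-establish it (or cite it directly) via the same partition argument: under the standing assumption that the searcher cannot perform arbitrarily small oscillations, partition $S$ into intervals of a fixed small length $\epsilon$, and charge to each interval $I$ the cost of the $k$ separate entries, visits of the center $c_I$, and exits that are required to visit $c_I$ a total of $k$ times, namely at least $\epsilon k/2$. Summing these charges over the $md/\epsilon$ intervals gives a total traversed distance of at least $kmd/2$ by the time every center has been visited $k$ times, so the center whose $k$-th visit occurs last satisfies $t_x^k \geq kmd/2$.

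Specializing this claim to $k=r$ yields a point $x \in S$, at some distance $d_x \le d$, with $t_x^r \geq rmd/2$. This is exactly where the deterministic redundancy model replaces the probabilistic one: a target now requires precisely $r$ visits before it is located, so placing the target at $x$ makes the location cost equal to $t_x^r$, with no need to weight over a geometric distribution of detection times as in Lemma~\ref{lemma:probabilistic.lower}. Hence the cost to locate this target is at least $rmd/2$ while its distance is $d_x \le d$, and by the definition of competitive ratio in~\eqref{eq:compititive.ratio} we obtain $\alpha \geq t_x^r / d_x \geq (rmd/2)/d_x \geq rm/2$, as claimed.

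The only genuine obstacle is the one already confronted in Lemma~\ref{lemma:probabilistic.lower}: making the $\epsilon$-interval charging scheme rigorous and justifying the non-oscillation assumption, so that the per-interval cost of $\epsilon k/2$ is actually incurred and the charges do not overlap across distinct intervals. Once that geometric claim is available, the redundancy lower bound follows immediately and is, if anything, cleaner than its probabilistic counterpart, since the step $\sum_{k} k\,p(1-p)^{k-1} = 1/p$ is here replaced by simply reading off the single point that achieves $t_x^r \geq rmd/2$.
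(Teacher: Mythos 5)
Your proposal is correct and takes essentially the same route as the paper, which offers no separate argument but states only that the lemma ``follows using an approach similar to the proof of Lemma~\ref{lemma:probabilistic.lower}'': you reconstruct exactly that reduction by specializing the $\epsilon$-interval charging claim to $k=r$, placing the target at the point whose $r$-th visit occurs last (at cost at least $rmd/2$, distance at most $d$), and dispensing with the geometric averaging. You also correctly observe that the redundancy case sidesteps the dependence of the witness point on $k$ that the probabilistic version must average over, so nothing is missing.
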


We first evaluate the best exponential strategy.
\begin{theorem}
The best exponential strategy has competitive ratio at most 
$
2(\left \lceil\frac{r}{2}\right \rceil m-1)\ \left(\frac{\lceil\frac{r}{2}\rceil m}{\lceil\frac{r}{2}\rceil m-1}
\right)^{\lceil\frac{r}{2}\rceil m} + 1 \leq 2\mathrm e ( \lceil \frac{r}{2} \rceil(m-1)) + 1.
$
\label{thm:fault.geometric}
\end{theorem}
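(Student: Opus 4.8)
The plan is to analyse the exponential strategy $x_i=b^i$ for a base $b$ fixed at the end, and to exploit one structural observation: each time the searcher explores the target's ray to depth at least $d$, it passes over the target \emph{twice} --- once outbound and once on the return to $O$. Hence a single deep-enough excursion already accounts for two of the $r$ required visits, and it suffices to guarantee $\lceil r/2\rceil$ excursions along the target ray that reach depth $d$. Setting $M\stackrel\cdot{=}\lceil r/2\rceil m$, the guiding principle is that, for the best exponential strategy, the redundancy-$r$ problem on $m$ rays behaves exactly like the ordinary ($r=1$) search problem on $M$ rays.

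To make this precise I would fix a target at distance $d$ on ray $\rho$ and let $i_1$ be the least index with $i_1\equiv\rho\pmod m$ and $b^{i_1}\ge d$. Since the depths increase, every later visit to ray $\rho$ also reaches the target, so the decisive (i.e.\ $\lceil r/2\rceil$-th) deep excursion has index $N=i_1+(\lceil r/2\rceil-1)m$. The $r$-th pass over the target falls on the outbound leg of excursion $N$ when $r$ is odd, in which case the distance travelled until detection is $2\sum_{j=0}^{N-1}b^j+d$. The worst-case ratio is obtained by pushing $d$ to the smallest value the constraint $d>b^{i_1-m}$ allows, i.e.\ $d\to b^{i_1-m}$, and then letting $i_1\to\infty$ so that the partial geometric sum becomes complete.

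Evaluating the geometric sum in this regime gives a competitive ratio bounded by $1+\tfrac{2b^{M}}{b-1}$, which is exactly the worst-case ratio of the standard exponential strategy of base $b$ on $M$ rays (compare~\eqref{eq:prelim.optimal.cr}); the exponent collapses to $M$ because the index gap between the threshold $b^{i_1-m}$ and the decisive excursion $N$ equals $(\lceil r/2\rceil-1)m+m=M$. It then remains to choose $b$ minimising $b^{M}/(b-1)$, which a one-line derivative computation places at $b=\tfrac{M}{M-1}=\tfrac{\lceil r/2\rceil m}{\lceil r/2\rceil m-1}$; substituting yields the first claimed expression $2(M-1)\bigl(\tfrac{M}{M-1}\bigr)^{M}+1$. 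Finally, rewriting this as $1+2M\bigl(1+\tfrac1{M-1}\bigr)^{M-1}$ and invoking $\bigl(1+\tfrac1{M-1}\bigr)^{M-1}\le\mathrm e$ produces the clean bound of order $2\mathrm e\lceil r/2\rceil m+1$, matching the lower bound $\tfrac{rm}{2}$ of Lemma~\ref{thm:fault.lower} up to a constant factor.

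The step I expect to be most delicate is the bookkeeping around the parity of $r$. When $r$ is even the $r$-th pass falls on a \emph{return} leg of excursion $N$, so the travelled distance is $2\sum_{j=0}^{N}b^j-d$ rather than $2\sum_{j=0}^{N-1}b^j+d$; this inserts an extra factor of $b$ into the decisive term and must be shown not to degrade the bound beyond the stated form. I would treat this by a separate but parallel computation for even $r$, absorbing the extra return leg into the ceiling $\lceil r/2\rceil$ and the asymptotic passage $i_1\to\infty$. Two secondary points also need care: confirming that the supremum over target placements is genuinely approached as $i_1\to\infty$ (and not at some finite, smaller index), and checking that the identification with standard $M$-ray search is exact in the relevant range rather than merely asymptotic.
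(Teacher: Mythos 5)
Your proposal is correct and takes essentially the same approach as the paper: the observation that each deep excursion yields two passes over the target reduces the problem to ordinary exponential search on $M=\lceil r/2\rceil m$ rays, the parity of $r$ is handled by the same case split (the even case terminating on a return leg, with travelled distance $2\sum_{j=0}^{N}b^j-d$), and the base is optimized at $b=\frac{M}{M-1}$ exactly as in the paper's proof. The only difference is cosmetic: your closing estimate gives $2\mathrm{e}M+1$ where the paper states $2\mathrm{e}(M-1)+1$, an additive discrepancy in the final simplification that does not affect the argument.
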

\begin{proof}
Let $\{x_i\}_{i=0}^\infty $ denote the exponential strategy, with $x_i=b^i$ for some $b$ to be fixed later. 
Suppose that the target is at distance $d$ from the origin, and let $l \in \mathbb N$ be such that 
$x_l <d \leq x_{l+m}$. We need to consider cases concerning the parity of $r$.
If $r$ is odd, i.e., $r=2k+1$ for $k \in \mathbb N$, then the cost of the strategy is upper bounded by 
$
2\sum_{i=0}^l x_i +2\sum_{i=1}^{(k+1)m-1} x_{l+i}+d = 2\sum_{i=0}^{(k+1)m-1} x_{i}+d, 
$
whereas if $r$ is even, ie. $r=2k$, the cost is bounded by
$
2\sum_{i=1}^l x_i +2\sum_{i=1}^{km-1} x_{l+i}+(x_{km}-d)= 2\sum_{i=0}^{km}x_i-d. 
$
It follows that the competitive ratio of the exponential strategy is at most
$1+2\frac{b^\frac{(r+1)m}{2}-1}{b-1}$, if $r$ is odd, and at most $2\frac{b^\frac{rm}{2}-1}{b-1}-1$ if $r$ is even.
We observe that in both cases, the competitive ratio is essentially identical to the competitive ratio 
of an exponential strategy  with base $b$, when searching for a single target in $\lceil\frac{r}{2}\rceil m$ rays without 
fault-tolerance considerations (with the exception of the negligible additive unit terms).
This motivates the choice of $b=\frac{\lceil\frac{r}{2}\rceil m}{\lceil\frac{r}{2}\rceil m-1}$ as the optimal base of the exponential strategy, 
which yields  a competitive ratio equal to $2(\lceil\frac{r}{2}\rceil m-1)\ \left(\frac{\lceil\frac{r}{2}\rceil m}
{\lceil\frac{r}{2}\rceil m-1}\right)^{\lceil\frac{r}{2}\rceil m} + 1 \leq 
2\mathrm e (\lceil\frac{r}{2}\rceil m-1) + 1$, if $r$ is odd, and
$2(\lceil\frac{r}{2}\rceil m-1)\ \left(\frac{\lceil\frac{r}{2}\rceil m}
{\lceil\frac{r}{2}\rceil m-1}\right)^{\lceil\frac{r}{2}\rceil m} - 1 \leq 
2\mathrm e (\lceil\frac{r}{2}\rceil m-1) - 1$, if $r$ is even. 
\end{proof}

Interestingly, we can show that there exist non-monotone strategies, which, for $r>2$, 
improve upon the (best) exponential strategy of Theorem~\ref{thm:fault.geometric}. For simplicity, let us assume
that $r$ is even, although the same approach applies when $r$ is odd, and leads to identical results up to an 
additive constant. In particular, we will consider the following strategy: In iteration $i$, the searcher visits 
ray $i \bmod m$ first up to the point at distance  $x_{i-m}$, then performs $r$ traversals of the interval 
$[x_{i-m},x_i]$ (thus visiting $r$ times each point of the said interval), then completes the iteration by 
returning to the origin (here we define $x_j=0$ for all $j<0$). We call this strategy {\sc NM-search} (non-monotone search).

\begin{theorem}
Strategy {\sc NM-search} has competitive ratio at most 
$r(m-1) \left( \frac{m}{m-1}\right)^m+2-r.$ 
\label{fault:nm}
\end{theorem}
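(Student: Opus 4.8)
The plan is to instantiate the geometric sequence $x_i=b^i$ (with $x_j=0$ for $j<0$) and to track, for a target placed at distance $d$, exactly when and at what cost it receives its $r$-th visit. First I would fix the index $l$ with $x_l<d\le x_{l+m}$. The crucial structural feature of {\sc NM-search} is that each point of a ray lies in exactly one ``new'' interval $[x_{i-m},x_i]$, and that interval is swept $r$ times within the single iteration $i$ that processes it; hence the target collects all $r$ demanded visits during that one iteration and is located there, rather than accumulating them slowly across $\lceil r/2\rceil$ separate cycles as a monotone strategy must. Over all placements the worst case is $d\to x_l^+$ together with the ray for which the processing iteration is as late as possible, namely index $l+m$.

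Next I would decompose the distance travelled up to that moment into two parts. The first is the ``commuting'' cost: in each iteration $i$ the searcher runs out to $x_{i-m}$ and later returns to $O$, contributing $2\sum x_{i-m}$. The second is the ``local processing'' cost of the $r$ traversals of each new interval, contributing $r\sum (x_i-x_{i-m})$; because the new intervals of a fixed ray tile an initial segment of it, these widths telescope, so this sum is governed by the current frontier alone and not by any far-away endpoint. Adding the cost incurred inside the locating iteration and substituting $x_i=b^i$, every sum becomes geometric and collapses to a closed form in $b$, $m$ and $r$.

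The key interpretive step is to read this closed form as essentially $\tfrac{r}{2}$ copies of the cost of ordinary single-target search on $m$ rays with base $b$: each out-and-back pass over a point supplies two of the $r$ required visits, so the redundancy-$r$ problem costs a factor $\approx r/2$ more than the plain $m$-ray problem but, thanks to the local re-traversals, at the \emph{same} small base rather than the inflated base $\tfrac{\lceil r/2\rceil m}{\lceil r/2\rceil m-1}$ forced on the monotone strategy of Theorem~\ref{thm:fault.geometric}. Recognising the single-target expression $1+2\frac{b^m-1}{b-1}$ inside the formula, I would minimise over $b$; the minimiser is the single-target optimum $b=\frac{m}{m-1}$, which upon substitution yields the stated $r(m-1)\left(\frac{m}{m-1}\right)^m+2-r$.

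The step I expect to be the main obstacle is the honest accounting of the cost spent \emph{inside} the locating iteration, where the $r$-th visit actually occurs: one must check that sweeping the containing interval the required number of times, for the worst position $d\to x_l^+$, does not contribute more than the lower-order additive terms already present, and in particular that the relevant interval widths stay controlled by the base $\frac{m}{m-1}$. This is precisely where non-monotonicity earns its keep, and it is what lets {\sc NM-search} beat the exponential strategy once $r>2$; verifying it reduces to confirming that the claimed placement is genuinely the worst one and that no earlier or later processing iteration produces a larger ratio.
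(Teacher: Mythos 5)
Your proposal is essentially the paper's own proof: you use the identical per-iteration cost decomposition (commuting cost $2x_{j-m}$ plus sweep cost $r(x_j-x_{j-m})$, so that the total cost telescopes to $r\sum_{j=0}^{l+m}x_j+(2-r)\sum_{j=0}^{l}x_j$), the identical worst case ($x_l<d\le x_{l+m}$ with $d\to x_l^+$ and the containing interval processed in iteration $l+m$), the identical geometric collapse to $\frac{rb^{m+1}}{b-1}+(2-r)$, and the identical final choice $b=\frac{m}{m-1}$. The one step you flag as the main obstacle --- delicate accounting inside the locating iteration --- is in fact sidestepped in the paper exactly as your decomposition permits: one simply charges the full cost of all iterations $0,\dots,l+m$, which is already a valid upper bound on the cost of the $r$-th visit, so no finer analysis of the worst position within the final interval is needed.
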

\begin{proof}
Suppose that the target lies at a distance $d$ from the origin, and let $l \in N$ denote an index such that 
$x_l <d \leq x_{l+m}$. Then the cost of locating the target is at most
\[
\sum_{j=0}^{l+m} (r(x_j-x_{j-m})+2x_{j-m})=
r \cdot \sum_{j=0}^{m+l} x_{j}+(2-r)\sum_{j=0}^{l} x_j.
\] 
Setting $x_i=b^i$ (which we will fix shortly), and given that $d>x_l$, we obtain that the competitive ratio is at most
\begin{equation}
\alpha \leq r b \frac{b^{m+1}}{b-1} +(2-r)\frac{b^{l}-1}{b^l(b-1)} 
\leq \frac{rb^{m+1}}{b-1}+(2-r),
\label{eq:fault.nm.1}
\end{equation}
where the last inequality follows from the fact that $b^l>1$. We now observe that~(\ref{eq:fault.nm.1}) is 
minimized for $b=\frac{m}{m-1}$. Substituting in~(\ref{eq:fault.nm.1}) yields
\[
\alpha \leq
r(m-1) \left( \frac{m}{m-1}\right)^m+2-r.
\]
\end{proof}

It is very easy to show, by comparing the results of Theorems~\ref{thm:fault.geometric} and~\ref{fault:nm}, that 
the non-monotone strategy is superior to the best exponential strategy for $r>2$.


Consider now contract scheduling with redundancy parameter $r$, in the sense that the interruptible system 
may output only the solutions of contracts that have been executed at least $r$ times by time $t$. In this setting, the 
best schedule is derived from a pseudo-exponential strategy, which is defined in phases as follows: in phase $i \geq 0$,
$r$ contracts for problem $i\mod n$, and of length $b^i$ are executed, for given base $b>i$. 
It turns out that this strategy attains the optimal acceleration ratio. The proof of the following theorem uses
techniques from~\cite{aaai06:contracts}.
\begin{theorem}
The pseudo-exponential scheduling strategy with base $b=\frac{n+1}{n}$ has acceleration ratio at most
$rn \left(\frac{n+1}{n} \right)^{n+1}$. Furthermore, this acceleration ratio is optimal. 
\label{thm:fault.contract}
\end{theorem}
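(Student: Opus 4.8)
The plan is to prove the two assertions separately: first the stated upper bound on the acceleration ratio of the pseudo-exponential strategy, and then its optimality over all (not necessarily cyclic) schedules with redundancy parameter $r$. For the upper bound I would argue exactly as in the proof of Theorem~\ref{thm:geometric.contracts}, exploiting the fact that the pseudo-exponential strategy is, up to a uniform time-scaling, a copy of the ordinary exponential strategy. Indeed, since phase $i$ consists of $r$ identical contracts of length $b^i$ for problem $i \bmod n$, the completion time of phase $i$ equals $r\sum_{l=0}^{i} b^{l}$, i.e.\ exactly $r$ times the time at which the plain exponential strategy with the same base would complete its single contract of length $b^i$, while the trusted length reached is still $b^i$. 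The ratio is therefore maximised for interruptions $t$ approaching the completion of a phase $i+n$ from below, where $t=r\frac{b^{i+n+1}-1}{b-1}$ while problem $(i+n)\bmod n$ is still only trusted up to the length $b^i$ attained in phase $i$. This gives $\beta \le \sup_{i>0}\frac{r(b^{i+n+1}-1)}{b^{i}(b-1)}=\frac{r\,b^{n+1}}{b-1}$, and substituting $b=\frac{n+1}{n}$ (so that $b-1=\frac1n$) yields the claimed value $rn\left(\frac{n+1}{n}\right)^{n+1}$.

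For optimality I would adapt the lower-bound machinery of~\cite{aaai06:contracts}. The first step is to reduce to a canonical form: by a standard exchange argument it suffices to consider schedules with no idle time in which the contract lengths, listed in order of completion, form a non-decreasing sequence, and in which the $n$ problems are served in balanced round-robin blocks of $r$ identical contracts (any deviation leaves some problem's trusted length more outdated and hence can only increase the acceleration ratio). Writing the completion-order lengths as a non-decreasing sequence $\{w_k\}$ with partial sums $C_k=\sum_{i\le k}w_i$, the key observation is that just before the block that raises problem $j$'s trusted length completes, the trusted length of $j$ equals that of its $r$-th most recent completed contract, which in the balanced schedule lags by exactly $rn$ positions. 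This produces the family of constraints $C_{k+1}\le \beta\, w_{k+1-rn}$, one for each block boundary.

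The heart of the argument, and the step I expect to be the main obstacle, is to show that this constraint system is minimised by a geometric length sequence $w_k=b^{\lfloor (k-1)/r\rfloor}$. Here I would invoke the convexity/averaging technique of~\cite{aaai06:contracts}: among all feasible non-decreasing sequences, the worst ratio $\sup_k C_{k+1}/w_{k+1-rn}$ is driven to its infimum by the self-similar (pseudo-exponential) sequence, collapsing the infinitely many constraints to the single inequality $\beta\ge \frac{r\,b^{n+1}}{b-1}$; minimising the right-hand side over $b>1$ gives $b=\frac{n+1}{n}$ and hence $\beta\ge rn\left(\frac{n+1}{n}\right)^{n+1}$, matching the upper bound. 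The delicate points are precisely the ones the exchange arguments of~\cite{aaai06:contracts} are designed to handle: verifying that a trusted length genuinely requires $r$ completed copies, that non-monotone or non-round-robin schedules cannot evade the $rn$-position lag, and that the extremality of the exponential sequence survives the redundancy bookkeeping. Once these reductions are in place the remaining calculation is the same optimisation over $b$ already carried out for the upper bound.
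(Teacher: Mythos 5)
Your proposal is correct and takes essentially the same route as the paper: the upper bound is the same worst-interruption computation yielding $r\,\frac{b^{n+1}}{b-1}$, minimized at $b=\frac{n+1}{n}$, and the optimality claim is the same adaptation of the lower-bound machinery of \cite{aaai06:contracts}, where your balanced round-robin block normalization plays the role of the paper's ``crucial observation'' that an optimal schedule always serves next the problem with the smallest completed contract length (completion taken to multiplicity $r$). Like you, the paper defers the remaining exchange arguments and the extremality of the geometric sequence to the proof of Theorem 1 in \cite{aaai06:contracts}, so your sketch matches the paper's own level of detail.
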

\begin{proof}
The pseudo-exponential strategy with base $b$ can be analyzed using the standard approach (e.g. as in~\cite{ZCC.1999.realtime}),
and its acceleration ratio is equal to $r \frac{b^{n+1}}{b-1}$, which is minimized for $b=\frac{n+1}{n}$. On the other hand, the 
lower bound follows based on ideas very similar to~\cite{aaai06:contracts}, which gives a tight lower bound on the acceleration
ratio of every schedule. In particular, the crucial observation is that there exists an optimal schedule with the property that 
whenever a new contract is about to be scheduled, the problem with the smallest completed contract length (where completion 
now is defined to multiplicity $r$) will be chosen. The remaining technical details follow precisely along the lines of the 
proof of Theorem 1 in~\cite{aaai06:contracts}.  
\end{proof}

A different setting stipulates that the schedule returns, upon interruption $t$ and for queried problem $p$, the $r$-th smallest
contract for problem $p$ that has completed its execution by time $t$. In this setting, we can still apply the pseudo-exponential
strategy (which is clearly non-monotone). We can show, as in ray searching, that this strategy is better than the 
best exponential strategy, albeit slightly so. 
\begin{theorem}
The best exponential strategy for the $n$-problem contract scheduling problem with redundancy parameter $r$
has acceleration ratio at most 
$
\left(rn+1\right)\left(1+\frac{1}{rn}\right)^{rn} \leq \mathrm ern+e.
$ 
Furthermore, there exists a non-monotone strategy which improves upon the best exponential strategy for all $n,r$.  
\end{theorem}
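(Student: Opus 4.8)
The plan is to treat the two claims in turn, reading the redundancy requirement as follows: upon interruption the system reports, for problem $p$, the \emph{$r$-th longest} completed contract, i.e.\ the largest $L$ for which at least $r$ completed contracts of length $\ge L$ exist. The acceleration ratio is then~\eqref{eq:acceleration.ratio} with $\ell_{p,t}$ equal to this quantity, and I would verify that it matches the stated exponential bound before addressing the non-monotone improvement.

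For the exponential bound I would analyse the schedule $x_i=b^i$, where iteration $i$ runs one contract of length $b^i$ for problem $i\bmod n$. As in the derivation of~\eqref{eq:prelim.optimal.ar}, the ratio is maximised an instant before a contract completes, and the binding problem is the one currently being served. If iteration $i$ (serving problem $i\bmod n$) is about to finish, then $t=\frac{b^{i+1}-1}{b-1}$, while the completed contracts of that problem lie at iterations $i-n,i-2n,\dots$, so its $r$-th longest completed contract is $b^{\,i-rn}$; every other problem was served more recently and gives a strictly smaller ratio. Hence
\[
\beta\le\sup_i\frac{b^{i+1}/(b-1)}{b^{\,i-rn}}=\frac{b^{\,rn+1}}{b-1}.
\]
Minimising via $\frac{d}{db}\log\frac{b^{rn+1}}{b-1}=0$ gives $b=\frac{rn+1}{rn}$, and substitution yields $(rn+1)\bigl(1+\tfrac{1}{rn}\bigr)^{rn}\le \mathrm e(rn+1)=\mathrm ern+\mathrm e$. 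Conceptually this is $\beta^*(rn)$: redundancy $r$ inflates the effective number of problems from $n$ to $rn$.

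For the improvement I would start from the pseudo-exponential schedule of Theorem~\ref{thm:fault.contract}, whose phase $i$ runs $r$ equal contracts of length $b^i$ for problem $i\bmod n$ (duration $rb^i$). This schedule is non-monotone, and in the present measure the $r$-th longest contract of a problem leaps to $b^i$ the instant its phase $i$ completes, with \emph{no} per-level lag. Repeating the frontier analysis, the binding interruption occurs just before a phase of the current problem completes, giving $t\to r\frac{b^{i+1}-1}{b-1}$ against $r$-th longest contract $b^{\,i-n}$, hence acceleration ratio $\frac{r\,b^{\,n+1}}{b-1}$, minimised at $b=\frac{n+1}{n}$ to $r(n+1)\bigl(1+\tfrac1n\bigr)^n$. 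The improvement would then follow from comparing this closed form against $(rn+1)\bigl(1+\tfrac1{rn}\bigr)^{rn}$.

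The main obstacle is precisely this comparison, which is more delicate than its search counterpart. In Theorem~\ref{fault:nm} the non-monotone strategy wins because it concentrates all $r$ passes at the frontier, so the geometric part of the cost is governed by $m$ rather than $\tfrac{rm}{2}$ rays, and the factor-$2$ slack of back-and-forth travel leaves room for the gain; the exponential \emph{contract} schedule carries no such slack. I would therefore have to pin down the correct non-monotone construction — localising the $r$-fold redundancy near the current frontier while avoiding the cost inflation of running the $r$ copies consecutively — and then establish the strict inequality uniformly in $n,r$ using the monotonicity of $(1+1/k)^{k}$ and $(1+1/k)^{k+1}$. I expect the asymptotic regime $n,r\to\infty$ to be comparatively easy (both quantities tend to $\mathrm e\,rn$), so the real work is controlling the finite corrections for small $n$ and $r$, exactly where the parity/ceiling bookkeeping already dominated the proof of Theorem~\ref{fault:nm}.
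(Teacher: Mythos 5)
Your treatment of the first claim is correct and coincides with the paper's own proof: the same worst-case interruption (an instant before a contract of the currently served problem completes), the same identification of the binding problem and of its $r$-th largest completed contract ($b^{\,i-rn}$ in your indexing, $b^{\,i-(r-1)n}$ in the paper's, which places the interruption before the end of the $(n+i)$-th contract --- an immaterial shift), the same bound $\frac{b^{rn+1}}{b-1}$, and the same optimizing base $b=\frac{rn+1}{rn}$ yielding $(rn+1)\left(1+\frac{1}{rn}\right)^{rn}\le \mathrm ern+\mathrm e$. Your reading of the redundancy requirement as ``$r$-th largest completed contract'' is also the right one (the paper's ``$r$-th smallest'' is evidently a slip, since the $r$-th smallest completed contract would not grow with $t$).

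The genuine gap is the second claim, and it is exactly where you stop. The paper's proof takes the pseudo-exponential strategy of Theorem~\ref{thm:fault.contract}, whose ratio in this measure is, as you also compute, $\frac{rb^{n+1}}{b-1}$, optimized at $b=\frac{n+1}{n}$ to $r(n+1)\left(1+\frac1n\right)^n=rn\left(1+\frac1n\right)^{n+1}$, and asserts in one line that this beats $(rn+1)\left(1+\frac{1}{rn}\right)^{rn}=rn\left(1+\frac{1}{rn}\right)^{rn+1}$ because $(1+1/x)^x$ is increasing. Your suspicion that ``the main obstacle is precisely this comparison'' is well founded: the relevant function is $g(x)=(1+1/x)^{x+1}$, and since $\frac{d}{dx}\ln g(x)=\ln(1+1/x)-\frac1x<0$, $g$ is strictly \emph{decreasing}, so $rn\,g(n)>rn\,g(rn)$ for every $r\ge 2$ --- that is, the pseudo-exponential bound is strictly \emph{worse} than the best exponential one (for $n=1$, $r=2$: $8$ versus $6.75$). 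The paper's monotonicity argument compares $f(n)$ with $f(rn)$ for $f(x)=(1+1/x)^x$ but ignores the mismatched prefactors $r(n+1)$ versus $rn+1$, i.e., the extra $+1$ in the exponent, and with these accounted for the inequality reverses. So while you correctly sensed that a different non-monotone construction would be needed, your proposal never exhibits one and therefore leaves the second assertion unproven; note, however, that the paper's own justification suffers from the same defect, so completing this part honestly would require a genuinely new schedule rather than the pseudo-exponential route that both you and the paper pursue.
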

\begin{proof}
Let $b$ denote the base of the exponential strategy. Consider a worst-case interruption at time $t$, right before the 
end of the $n+i$-th contract, i.e. at time $t=\frac{b^{n+i+1}-1}{b-1}$. Then, for every problem $p$, the scheduler 
has completed $r$ contracts for $p$ at lengths at least $b^{i-(r-1)n}$. After some simple calculations, we derive
that the acceleration ratio of the strategy is at most $\frac{b^{rn+1}}{b-1}$, which in turn 
is minimized for $b=\frac{rn+1}{rn}$, and  which proves the claimed bound on the best exponential strategy.  

The non-monotone strategy is precisely the pseudo-exponential strategy presented in Theorem~\ref{thm:fault.contract}.
This strategy is strictly better than the best exponential strategy, since the function $f(x)=(1+\frac{1}{x})^x$ 
is increasing; however, the gap between the two strategies is small. In particular, for $n \rightarrow \infty$, both strategies 
converge to the same acceleration ratio.
\end{proof}
The strategies described above establish connections beyond those that result from the 
use of cyclic strategies. More precisely, we have shown that non-cyclic ray-searching 
algorithms have counterparts in the domain of contract-scheduling; furthermore, the non-cyclic strategies
improve upon the best cyclic ones. We have thus addressed an open question from~\cite{steins},
who asked whether there exist connections between the two problems that transcend cyclic strategies.


\section{Randomized scheduling of contract algorithms}
\label{sec:randomized}

In this section we study the power of randomization for scheduling (deterministic) contract 
algorithms. Our approach is motivated by the randomized strategy of~\cite{ray:2randomized}
for searching on $m$ rays. We emphasize, however, that our analysis differs in several key points, and most 
notably on the definition of appropriate random events.  


We will analyze the following randomized  strategy : We choose a random permutation $\pi:\{0, \ldots n-1 \} 
\rightarrow \{0, \ldots n-1\}$  of the $n$ problems, as well as a random $\varepsilon$ uniformly distributed in $[0,1)$. 
In every iteration $i \geq 0$, the algorithm executes a contract for problem  
$\pi(i) \mod n$ with corresponding length  $b^{1+\varepsilon}$, with $b>1$.

\begin{theorem}
The acceleration ratio of the randomized strategy is 
$\beta_r(n,b)= n \frac {b^{n+1} \ln b}{(b^n-1)(b-1)}$.
\label{thm:full.randomization.upper}
\end{theorem}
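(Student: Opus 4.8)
The plan is to analyze the strategy against an oblivious adversary, who fixes the interruption time $t$ and the queried problem $j$ before the permutation $\pi$ and the phase $\varepsilon$ are drawn; consistent with the definition of $\mathbb{E}[\ell_{i,t}]$ introduced in Section~\ref{sec:probabilistic}, the quantity to evaluate is $\beta_r(n,b)=\sup_t\, t/\mathbb{E}[\ell_{j,t}]$. The first observation I would make is that, since $\pi$ is a uniformly random permutation, the position $s:=\pi^{-1}(j)$ of the queried problem is uniform over $\{0,\dots,n-1\}$ and independent of $\varepsilon$. Hence $u:=s+\varepsilon$ is uniform over $[0,n)$, and this single variable captures all the randomness on which the longest completed contract for $j$ depends: the contracts devoted to $j$ are scheduled in iterations $s,s+n,s+2n,\dots$ with lengths $b^{s+\varepsilon},b^{s+n+\varepsilon},\dots=b^{u},b^{u+n},b^{u+2n},\dots$

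Next I would fix $t$ and pin down $\ell_{j,t}$. Writing the finish time of the contract scheduled in iteration $i$ as $\sum_{k=0}^{i}b^{k+\varepsilon}=b^{\varepsilon}\frac{b^{i+1}-1}{b-1}$, the longest contract for $j$ completed by time $t$ is $\ell_{j,t}=b^{u+k^{\ast}n}$, where $k^{\ast}$ is the largest $k\ge 0$ for which iteration $s+kn$ finishes by $t$. Setting $\tau:=\log_b\big((b-1)t\big)$ and keeping the dominant term of the finish time, this condition reads $u+k^{\ast}n+1\le\tau$, so $k^{\ast}=\lfloor(\tau-u-1)/n\rfloor$ and therefore $u+k^{\ast}n=\tau-1-V$, where $V:=(\tau-u-1)\bmod n$.

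The crucial step is then to note that, as $u$ ranges over an interval of length exactly $n$, the reduced phase $V$ is uniformly distributed over $[0,n)$ \emph{independently of the value of $\tau$} (equivalently, of $t$). This is precisely what makes randomization pay off, as it removes the dependence of the ratio on the interruption time. I would then compute $\mathbb{E}[\ell_{j,t}]=b^{\tau-1}\,\mathbb{E}[b^{-V}]=b^{\tau-1}\cdot\tfrac1n\int_0^n b^{-v}\,dv=\frac{b^{\tau-1}\,(1-b^{-n})}{n\ln b}$ and combine it with $t=b^{\tau}/(b-1)$ to obtain $t/\mathbb{E}[\ell_{j,t}]=\frac{n\,b^{n+1}\ln b}{(b-1)(b^{n}-1)}$. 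Since this is independent of $t$, the supremum over $t$ equals the same value, namely $\beta_r(n,b)$.

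The main obstacle, and the step deserving the most care, is the uniformity claim for $V$, together with the justification that the additive constants discarded from the finish times do not affect the stated value, i.e.\ that the identity holds in the supremum (asymptotic) sense rather than merely approximately. Getting the definition right is equally essential: the expectation must sit in the denominator with $t$ fixed first, as an oblivious-adversary guarantee. Were one instead to move the supremum over $t$ inside the expectation, computing $\mathbb{E}[\sup_t t/\ell_{j,t}]$, each realization of the coins would contribute the deterministic value $b^{n+1}/(b-1)$ and no benefit from randomization would appear. This is also the point where the argument departs from the randomized ray-searching analysis of~\cite{ray:2randomized}: the relevant random event here is the uniform reduced phase $V$ of the queried problem, rather than a random initial ray, and it is this event that I would isolate and exploit.
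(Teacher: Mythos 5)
Your proposal is correct and reaches the stated bound, but by a genuinely different decomposition from the paper's. The paper fixes $t=\frac{b^k-1}{b-1}b^\delta$, determines that the contract running at time $t$ has index $l\in\{k-1,k\}$ (with $l=k-1$ precisely when $\varepsilon\geq\delta$), and computes $\mathbb{E}[D]$ \emph{exactly} by a two-case analysis ($\delta\geq 1$ versus $0\leq\delta<1$) with conditional expectations over the events $F=(l=k)$ and $\overline{F}$; like you, it places the expectation in the denominator, bounding $\sup_t t/\mathbb{E}[D]$, so your oblivious-adversary reading of the measure agrees with the paper's equation~(\ref{eq:randomization:acceleration}). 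You instead collapse the two sources of randomness into the single uniform variable $u=s+\varepsilon\in[0,n)$ and observe that the reduced phase $V=(\tau-u-1)\bmod n$ is uniform on $[0,n)$ for every fixed $\tau$ (correct: $u\mapsto(\tau-1-u)\bmod n$ preserves Lebesgue measure on $[0,n)$), which replaces the paper's case analysis and its technical Lemma~\ref{lemma:randomization.second.expectation} by the single integral $\mathbb{E}[b^{-V}]=\frac{1-b^{-n}}{n\ln b}$; this is cleaner and makes transparent why the ratio is independent of $t$. What you pay for it is the ``dominant term'' approximation, which you flag but do not resolve, so let me close it: the exact finish time of iteration $i$ is $b^\varepsilon\frac{b^{i+1}-1}{b-1}=\frac{b^{i+1+\varepsilon}-b^\varepsilon}{b-1}$, and discarding the $-b^\varepsilon/(b-1)$ term only makes finish times \emph{later}; hence your $k^{\ast}$ lower-bounds the true one, your $\mathbb{E}[\ell_{j,t}]$ lower-bounds the true expectation, and the constant you compute is a valid upper bound on $t/\mathbb{E}[\ell_{j,t}]$ for every admissible $t$. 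For the matching direction, the correction to $\log_b$ of the finish time is $\log_b(1+b^{\varepsilon-\tau})\to 0$ as $\tau\to\infty$, while $\ell_{j,t}$ can exceed your value by at most a factor $b^n$ on the (vanishing-probability) correction event, so the true ratio converges to your constant and the supremum equals it --- the same asymptotic-equality reading that the paper's own $\leq$ derivation implicitly uses. One caveat you share with the paper and should state: for $t$ before the queried problem's first contract completes (your $k^{\ast}<0$; the paper tacitly assumes contracts $k-n,\dots,k-1$ exist), the ratio is unbounded, so one needs the standard convention that interruptions occur only after the first round-robin pass.
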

\begin{proof}
Let $t$ denote the interruption time. Observe that $t$ can be expressed as $t=\frac{b^k-1}{b-1} b^\delta$, for some unique $k \in \mathbb{N}$ and $\delta$ such that $1 \leq b^\delta <\frac{b^{k+1}-1}{b^k-1}$. For convenience,
we will call the contract execution of length $b^{i+\varepsilon}$ the {\em $i$-th contract} of the strategy, and $i$ the contract {\em index} 
(with $i \geq 0$). Note that the start and finish times of the $i$-th contract are $\frac{b^{i}-1}{b-1} b^\varepsilon$ and 
$\frac{b^{i+1}-1}{b-1} b^\varepsilon$, respectively.

First, we need to identify the index of the contract during the execution of which the interruption time $t$ occurs; denote this index by $l$. Note that it cannot be that 
$l \geq k+1$, since $\frac{b^{k+1}-1}{b-1}b^\varepsilon \geq 
\frac{b^{k+1}-1}{b-1} >t$. Similarly, it cannot be that $l \leq k-2$ because
$\frac{b^{k-1}-1}{b-1}b^\varepsilon \leq \frac{ {b^k}-b}{b-1}<\frac{b^k-1}{b-1}\leq t$.
We conclude that either $l=k$, or $l=k-1$. In particular, the random event $(l=k-1)$ occurs 
only when $\frac{b^{k}-1}{b-1} b^\varepsilon \geq t= \frac{b^{k}-1}{b-1} b^\delta$,
which implies that $\varepsilon \geq \delta$.

Next, we need to evaluate the expected value of the random variable $D$ that corresponds 
to the length of the longest contract for the problem that is requested at time  
$t$, and which has completed at time $t$. This will allow us to bound the acceleration ratio $\alpha$
of the randomized strategy, as 
\begin{equation}
\sup_t \frac{t} {\mathbb{E}[D]},  \ \textrm{with} \ t= \frac{b^k-1}{b-1} b^\delta \leq \frac{b^{k+1}-1}{b-1}.
\label{eq:randomization:acceleration}
\end{equation}

We consider two cases, depending on whether $\delta \geq1$.

\noindent
{\em Case 1: $\delta \geq 1$}. \ In this case, $\varepsilon < \delta$, which implies, from the above discussion that $k=l$. 
Therefore, the strategy will return one of the contracts with indices $k-1,k-2, \ldots, k-n$, namely the contract that 
corresponds to the requested problem. Due to the random permutation of problems performed by the strategy, each of these
indices is equally probable to correspond to the requested problem. We thus obtain
$
\mathbb{E}[D] = \mathbb{E}[D \mid (k=l)] = \frac{1}{n} \sum_{i=1}^n \mathbb{E}[b^{k-i+\varepsilon}] 
= \frac{1}{n} \sum_{i=1}^n b^{k-i}\frac{b-1}{\ln b} =
\frac{1}n{} \frac{b^k(b^n-1)}{b^n \ln b},
$
where we used  the fact that $\varepsilon$ is uniformly distributed in $[0,1)$.
Combining with~(\ref{eq:randomization:acceleration}) we obtain 
$\beta_r(n,b) \leq \frac{b^{k+1}-1}{b^-1} \frac{1}{\mathrm E[D]} \leq n \frac {b^{n+1} \ln b}{(b^n-1)(b-1)}$.

\noindent
{\em Case 2: $0 \leq \delta < 1$}, \ in other words, $b^\delta < b$. Note that in this case, the events 
$(l=k)$ and $(\varepsilon < \delta)$ are equivalent; similarly for the events $(l=k-1)$ and $(\varepsilon \geq \delta)$.
The following technical lemma establishes $\mathbb E[D]$ in this case.
\begin{lemma} 
$\mathbb{E}[D] = \frac{1}{n} \frac{b^{k-1}(b^n-1)b^\delta}{b^n} \ln b$.
\label{lemma:randomization.second.expectation}
\end{lemma}
\begin{proof}
Denote by $F$, and $\overline{F}$ the events $(l=k)$ and $(l=k-1)$, respectively. 
We have 
\begin{equation}
\mathbb{E}[D] = \mathbb{E}[D \mid F] \ \textrm{Pr($F$)}+
\mathbb{E}[D \mid \overline{F}] \ \textrm{Pr($\overline{F}$)}
\label{eq:randomization.upper.1}
\end{equation}
 Moreover,
 \begin{eqnarray}
\mathbb{E}[D \mid \overline{F}] &=& \frac{1}{n} \sum_{i=1}^n \mathbb{E}[b^{k-i+\varepsilon} \mid \overline{F}]  \nonumber \\
& =&\frac{1}{n} \frac{b^\delta-1} {\ln b} \frac{1} {\textrm{ Pr($\overline{F}$) }} \sum_{i=1}^n b^{k-i} \nonumber \\ 
&=& \frac{1}{n}\frac{b^\delta-1}{\ln b} \frac{b^{k-n}}{\textrm{ Pr($\overline{F}$) }} \frac{b^n-1}{b-1}. 
\label{eq:randomization:expectation1}
\end{eqnarray}
Similarly, we have that 
\begin{eqnarray}
\mathbb{E}[D \mid F] &=& \frac{1}{n} \sum_{i=1}^n \mathbb{E}[b^{k-1-i+\varepsilon} \mid F] \nonumber \\
&=&\frac{1}{n} \frac{b-b^\delta} {\ln b} \frac{1} {\textrm{ Pr(F) }}  \sum_{i=1}^n b^{k-1-i} \nonumber \\ 
&=& \frac{1}{n} \frac{b^\delta-1}{\ln b} \frac{b^{k-1-n}}{\textrm{ Pr(F) }} \frac{b^n-1}{b-1}. 
\label{eq:randomization:expectation2}
\end{eqnarray}
Here, we use the facts that 
\[
\mathbb{E}[b^\varepsilon \mid F] =\int_{b^\delta}^b x \cdot \frac{1} { \textrm{ Pr(F) } \ln b}, \ \textrm{and}
\]
\[
\mathbb{E}[b^\varepsilon \mid \overline{F}] =\int_{1}^{b^\delta} x \cdot \frac{1}{ \textrm{ Pr($\overline{F}$) } } \ln b.
\]
Combining~(\ref{eq:randomization.upper.1}),~(\ref{eq:randomization:expectation1}) and~(\ref{eq:randomization:expectation2})
we obtain, after some calculations, that 
\begin{equation}
\mathbb{E}[D] = \frac{1}{n} \frac{b^{k-1}(b^n-1)b^\delta}{b^n} \ln b,
\label{eq:randomization.second.expectation}
\end{equation}
which completes the proof.
\end{proof}

Combining Lemma~\ref{lemma:randomization.second.expectation} and~(\ref{eq:randomization:acceleration})
we obtain again that 
$\beta_r(n,b) \leq \frac{(b^k-1)b^\delta}{b^-1} \frac{1}{\mathrm E[D]} \leq n \frac {b^{n+1} \ln b}{(b^n-1)(b-1)}$.
\end{proof}

%

\subsection{Evaluation of the randomized strategy}
\label{subsec:randomized.evaluation}
In order to evaluate the best randomized exponential strategy, we must find the $b$
that minimizes the function $\beta_r(n,b)$.
It is easy to see, using standard calculus, that $\beta_r(n,b)$ has a unique minimum, for given $n$.
However,  unlike the deterministic case, there is no closed form for $\beta_r^*(n)=\min_{b>1} \beta_r(n,b)$. 
Thus, we must resort to numerical methods.

Figure~\ref{fig:randomization} illustrates the performance of the randomized strategy $\beta_r^*(n)$ versus
the deterministic optimal strategy, denoted by $\beta^*(n)$. We observe that $\beta^*_r(n) \leq 0.6\beta^*(n)$,
for $n=1, \ldots 80$.
In fact, we can show analytically that for $n\rightarrow \infty$, $\beta_r^*(n)$ converges to a value that does not exceed 
$\frac{\mathrm e}{\mathrm e-1}(n+1)$  (recall that $\beta^*(n)$ converges to $\mathrm e (n+1)$). More precisely, choosing 
$b=\frac{n+1}{n}$ we obtain $\beta_r^*(n) \leq (n+1)\frac{(1+1/n)^n \ln(1+1/n)}{((1+1/n)^n-1)(1+1/n)}$,
which converges to $(n+1)\frac{\mathrm e}{\mathrm e -1}$, a value extremely close to the computational results.

\begin{figure}[htb!]
   \centerline{\includegraphics[height=7cm, width=9 cm]{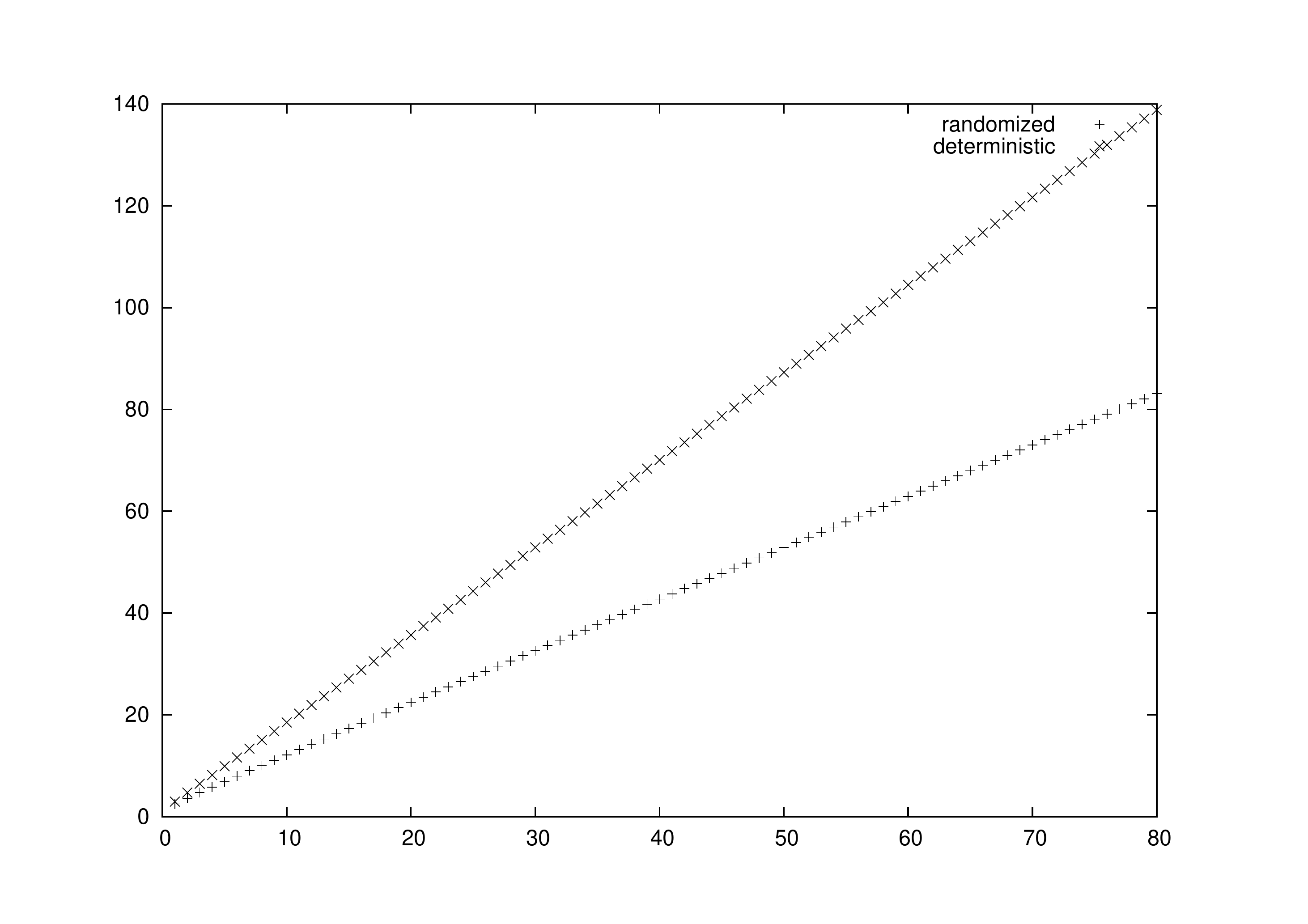}}
  \caption{Plots of the randomized ($\beta_r^*(n))$ and the deterministic ($\beta^*(n)$) 
 acceleration ratios, as functions of $n$.}
  \label{fig:randomization}
\end{figure}

\section{Trade-offs between performance and executions of searches/algorithms}
\label{sec:preemptions}

Most previous work on ray searching assumes that the searcher
can switch directions at no cost. In practice, turning is a costly operation in robotics, 
and thus should not be ignored. In a similar vein, we usually
assume that there is no setup cost upon execution of a contract algorithm, however
some initialization cost may be incurred in practice. One could address this requirement by 
incorporating the turn/setup cost in the performance
evaluation (see~\cite{demaine:turn} for ray searching with turn cost). 
In this section we follow a different approach by studying the trade-off between performance
and the number of searches and/or executions of algorithms. 
 
We will make a distinction between two possible settings. In the first setting, we use the standard
definitions of search and scheduling as given in Section~\ref{sec:introduction}. Specifically, we 
address the question: Given a target at distance $t$ (resp. an interruption $t$) what is the minimum
number of turns (resp. executions of contracts) so as to guarantee a certain competitive ratio 
(resp. acceleration ratio)? We call this the {\em standard} model.

The second setting is motivated by applications in which searching previously explored
territory comes at no cost. One such example is the expanding search paradigm~\cite{thomas:expanding}.
Another example is parallel linear searching on arrays modeled as ray searching~\cite{hyperbolic}, in which 
the searcher can ``jump'' to the last-explored position.

While the latter setting does not have a true counterpart in the realm of contract scheduling, it still gives rise 
to a scheduling problem. Suppose we have $n$ problems, each with its own statement of an 
{\em interruptible} algorithm (as opposed to a contract algorithm). In addition, we allow the use of 
{\em preemptions}, in that we can preempt, and later resume the execution of an algorithm. In this context,
we face the scheduling problem of interleaving the executions of interruptible algorithms. Note that we 
can still use the acceleration ratio, given by~(\ref{eq:acceleration.ratio})) as the
performance measure, with the notable difference that here $\ell_{i,t}$ denotes the {\em total} (aggregate) time of algorithm
executions for problem $i$, by time $t$. 
We call the above model the {\em preemptive} model. 


\subsection{Trade offs in the preemptive model}
\label{subsec:preemptive}

We consider first the problem of scheduling interleaved executions of interruptible algorithms. Clearly, 
the optimal acceleration ratio is $n$: simply assign each time unit uniformly across all problems, in 
a round-robin fashion. However, this optimal strategy results in a linear number of preemptions, as function of time. 
We thus consider the following {\em geometric} round-robin strategy, which is a combination of uniform and 
exponential strategies. The strategy works in phases; namely, in phase $i$ ($i \geq 0$), it
executes algorithms for problems $0 \ldots n-1$  with each algorithm allotted a time span 
equal to $b^i$, for fixed $b>1$ (we will call each algorithm execution for problem $i$ a {\em job} for problem $i$).
\begin{lemma}
The geometric strategy has (worst-case) acceleration ratio $n(b+1)$, asymptotic acceleration ratio $nb$, and for any
$t$, the number of preemptions incurred up to $t$ is at most $n \log_b \left(\frac{t(b-1)}{n}+1 \right) +n$.
\label{lemma:preemptions.upper}
\end{lemma}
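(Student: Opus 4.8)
The plan is to analyze the strategy phase by phase, tracking two quantities: the total elapsed time and the aggregate running time each problem has accumulated. First I would record that phase $i$ consists of $n$ jobs of length $b^i$ each, so a phase lasts $nb^i$; the cumulative time through the end of phase $i$ is $T_i = n\frac{b^{i+1}-1}{b-1}$; and immediately after a problem completes its phase-$i$ job its aggregate running time is $A_i = \frac{b^{i+1}-1}{b-1}$, while just before that job it is $A_{i-1} = \frac{b^i-1}{b-1}$.

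For the acceleration ratio I would first locate the worst interruption. For a fixed problem $q$, the quantity $\ell_{q,t}$ stays constant while some other problem runs and grows at unit rate (the same rate as $t$) while $q$'s own job runs; hence $t/\ell_{q,t}$ rises during $q$'s idle periods and falls during its job, so it peaks exactly at the instant just before $q$'s next job begins. If $q$ occupies position $p$ in the fixed order within each phase, this peak in phase $i$ equals $n + p(b-1)\frac{b^i}{b^i-1}$, obtained by substituting $t = T_{i-1} + p b^i$ and $\ell_{q,t} = A_{i-1}$. Since $\frac{b^i}{b^i-1}$ is decreasing in $i$ and $p\le n-1$, the worst case over all non-degenerate interruptions ($i\ge 1$) is $n + (n-1)b \le n(b+1)$, which gives the worst-case bound; letting $i\to\infty$ yields the asymptotic value $n + (n-1)(b-1)\le nb$.

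For the preemption count I would observe that every transition between consecutive jobs is a preemption, so the number of preemptions up to time $t$ is at most the number of jobs started, which through phase $i$ is $n(i+1)$. It then remains to bound the index $i$ of the phase containing $t$: since all of phases $0,\dots,i-1$ have completed by time $t$, we have $t \ge T_{i-1} = n\frac{b^i-1}{b-1}$, which rearranges to $i \le \log_b\!\left(\frac{t(b-1)}{n}+1\right)$. Substituting into $n(i+1)$ gives exactly the claimed bound $n\log_b\!\left(\frac{t(b-1)}{n}+1\right)+n$.

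The main obstacle is the acceleration-ratio part, and specifically the two justifications that make the computation rigorous: (i) that $t/\ell_{q,t}$ is maximized at the instant just before a job starts rather than during it, which rests on the fact that once the ratio exceeds $1$ its numerator and denominator grow at the same unit rate during $q$'s job, so the ratio only decreases there; and (ii) handling the degenerate startup, since in phase $0$ a not-yet-scheduled problem has $\ell=0$ and would force an unbounded ratio. I would therefore restrict the worst-case supremum to interruptions after the first full round (equivalently $t\ge T_0=n$, i.e.\ $i\ge 1$), and note that this is precisely where $n+(n-1)b$ is attained, leaving only an additive slack of $b$ to the clean bound $n(b+1)$. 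By comparison, the preemption bookkeeping and the inversion of $T_{i-1}$ are routine.
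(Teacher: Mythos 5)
Your proof is correct, and on the preemption count it is essentially identical to the paper's: both observe that an interruption in phase $i$ forces $t \geq n\frac{b^i-1}{b-1}$, invert this to $i \leq \log_b\left(\frac{t(b-1)}{n}+1\right)$, and bound the number of job switches by $n(i+1)$. On the acceleration ratio you are sharper than the paper, which argues coarsely: for an interruption inside phase $i$ it pairs the overestimate $t \leq n\frac{b^{i+1}-1}{b-1}$ (the phase end) with the underestimate $\ell \geq \frac{b^i-1}{b-1}$ (the aggregate at the phase start), yielding $t/\ell \leq n\frac{b^{i+1}-1}{b^i-1}$, which is largest at $i=1$ (value exactly $n(b+1)$) and tends to $nb$. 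You instead pin down the exact peak --- just before problem $q$'s phase-$i$ job begins --- via the monotonicity of $t/\ell_{q,t}$, obtaining the strictly smaller constants $n+(n-1)b$ and $nb-b+1$, and you handle the startup degeneracy explicitly, where the paper relies on the canonical assumption (invoked in its proof of Theorem~\ref{thm:preemptions.lower}) that interruptions occur only after at least one job per problem. The one point to flag: your step (i) assumes the currently running execution counts toward $\ell_{q,t}$, which is the natural reading for interruptible algorithms and consistent with the paper's definition of $\ell_{i,t}$ as aggregate execution time by $t$; the paper's placement of the worst case ``right before the end of a phase'' implicitly credits only completed jobs, under which convention $\ell_{q,t}$ stays flat during $q$'s own job, the ratio keeps rising until that job completes, and the supremum is exactly $n(b+1)$ (resp.\ $nb$ asymptotically). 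Under either convention the lemma's stated bounds follow from your argument, so the proof stands; only the exact peak location and the additive slack of $b$ you identify are convention-dependent.
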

\begin{proof}
The worst-case acceleration ratio of the geometric strategy is attained at interruptions right before the end of a phase, say phase $i$,
in other words, for interruption time $t=n\frac{b^{i+1}-1}{b-1}$. At this time, every problem has been completed to an aggregate 
job length equal to $\ell=\sum_{j=0}^{i-1} b^j=\frac{b^i-1}{b-1}$. It is very easy to verify that $\frac{t}{l} \leq n(b+1)$, and 
that $\frac{t}{l} \rightarrow nb$, as $t \rightarrow \infty$ (i.e., for $i \rightarrow \infty$).

We now focus on bounding the number of preemptions. Suppose that 
the interruption $t$ occurs in the $i$-th phase, then we can express $t$ as 
$t=n \sum_{j=0}^{i-1}b^j+xnb^i$, where $x\in[0,1)$, therefore we obtain that $\frac{t}{n} \geq \frac{b^i-1}{b-1}$,
and hence $i \leq \log_b \left(\frac{t(b-1)}{n}+1 \right)$.
On the other hand, the number of interruptions by time $t$ is $I_t \leq in+\lceil xn \rceil \leq n(i+1)$. The result follows.
\end{proof}

We will now show that the geometric strategy attains essentially the optimal trade-offs. 
\begin{theorem}
For any strategy with (worst-case) acceleration ratio $n(1+b)-\epsilon$ for any $b>1$, and constant $\epsilon>0$, 
there exists $t$ such that the number of  preemptions up to time $t$ is at least $n \log_b \left(\frac{t(b-1)}{n}+1 \right) -n$. 
Moreover, any strategy with asymptotic acceleration ratio $nb(1-\epsilon)$, for any constant $\epsilon>0$, incurs 
$n \log_b \left(\frac{t(b-1)}{n}+1 \right) -o\left(n \log_b \left(\frac{t(b-1)}{n}+1 \right)\right)$ preemptions by time $t>t_0$,
for some $t_0$.
\label{thm:preemptions.lower}
\end{theorem}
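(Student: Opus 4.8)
The plan is to convert the hypothesis on the acceleration ratio into a pointwise budget on the aggregate running times, to count preemptions through the number of jobs, and to charge jobs against geometric thresholds of the least-served problem. Write $A$ for the ratio budget ($A=n(1+b)-\epsilon$ in the first part, $A=nb(1-\epsilon)$ asymptotically). Since idling only inflates the ratio, I may assume the processor is never idle, so $\sum_{i}\ell_{i,t}=t$; the ratio bound is then equivalent to $\ell_{i,t}\ge t/A$ for every problem $i$ and every (sufficiently large) $t$. The number of preemptions up to $t$ equals, up to an additive $\pm n$, the total number of jobs $\sum_i J_i(t)$. To lower bound this, I introduce the least-served aggregate $g(t)=\min_i \ell_{i,t}$ and the thresholds $v_j$ matched to the aggregate levels of the geometric strategy of Lemma~\ref{lemma:preemptions.upper}, i.e. $v_j=\frac{b^{j+1}-1}{b-1}$; letting $t_j$ be the first time $g(t)=v_j$, I define epoch $j$ as the interval during which $g$ climbs from $v_j$ to $v_{j+1}$. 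The key structural observation is that every problem whose aggregate at the start of epoch $j$ is below $v_{j+1}$ must receive at least one fresh job before the epoch ends, since at $t_{j+1}$ even the minimum reaches $v_{j+1}$. Hence the number of jobs inside epoch $j$ is at least $m_j:=|\{i:\ell_{i,t_j}<v_{j+1}\}|$, and the total number of jobs is at least $\sum_j m_j$.

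The remaining work is to show $\sum_j m_j$ is close to $n$ times the number of epochs. Writing $m_j=n-d_j$, the deficiency $d_j$ counts the over-served problems, those already at level $\ge v_{j+1}$ at $t_j$. Because $\sum_i \ell_{i,t_j}=t_j$ while the minimum is $g(t_j)=v_j$, each over-served problem stores excess more than $(b-1)v_j$ above the minimum, so $d_j<\big(t_j/g(t_j)-n\big)/(b-1)$. For the first part I would then choose the witnessing time $t$ at a near-balanced epoch boundary, where $g(t)$ approaches $t/n$: there $d_j\to 0$, and summing over the $\log_b\!\big(\tfrac{t(b-1)}{n}+1\big)$ completed epochs gives $\sum_j m_j\ge n\log_b\!\big(\tfrac{t(b-1)}{n}+1\big)-O(n)$, the claimed bound after folding the boundary slack into the additive $-n$. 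That such a near-balanced $t$ exists is guaranteed by the worst-case ratio being finite, which keeps $g$ within a factor $1+b$ of the average $t/n$ and lets me select a boundary at which the minimum is essentially the average.

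For the asymptotic statement I would rerun the epoch count with the sharper budget $A=nb(1-\epsilon)$ in force for all $t>t_0$; the tighter control of $t/g(t)$ then makes the average per-epoch deficiency a vanishing fraction of $n$, so the job total is at least $(1-o(1))\,n\log_b\!\big(\tfrac{t(b-1)}{n}+1\big)$ for every large $t$. The step I expect to be the main obstacle is precisely the control of $d_j$ for schedules that are not balanced: a single over-served problem can absorb the ``missing'' job across many consecutive epochs, simply sitting above the rising threshold, so the crude per-epoch excess bound does not suffice on its own. I would resolve this by an amortized charging argument: over-service on a problem is paid for by the work already stored on it, and since $g$ rises by a factor $b$ per epoch while the total work $\sum_i\ell_{i,t}=t$ is conserved and no problem may fall below $t/A$, the total number of $(\text{problem},\text{epoch})$ deficiency pairs along any run is bounded; telescoping this charge converts the local slack into the global $-n$ (first part) and $o(\cdot)$ (asymptotic part) corrections. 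The convergence hypothesis on the asymptotic ratio is exactly what rules out persistently oscillating, unbalanced strategies that would otherwise evade a per-epoch count.
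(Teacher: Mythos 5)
Your reduction to epochs of the least-served level $g(t)=\min_i \ell_{i,t}$ is sound up to the point you yourself flag, but the fix you propose there does not work: the claim that ``the total number of $(\text{problem},\text{epoch})$ deficiency pairs along any run is bounded'' is false. A schedule can designate one problem as permanently over-served by giving it a single large job whenever its excess is about to sink below the rising threshold; since at an epoch boundary the total excess is at most $(A-n)v_j$ while remaining over-served for $k$ further epochs requires excess roughly $b^k v_j$, each such job sustains a streak of about $\log_b(A-n+1)=\Theta(\log_b n)$ epochs, so over $J$ epochs this problem contributes $\Theta(J)$ deficiency pairs at the cost of only $\Theta(J/\log_b n)$ extra jobs. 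The honest amortization your charging scheme supports is therefore $\sum_j d_j \le \Theta(\log_b n)\cdot(\#\text{jobs})$, which yields only $\#\text{jobs}\ge nJ/(1+\Theta(\log_b n))$ --- a multiplicative loss, incompatible with both the additive $-n$ of the first claim and the $(1-o(1))$ factor of the second. What actually kills such schedules is not stored \emph{work} but stored \emph{time}: pushing $\ell_i$ far above $g$ consumes processor time that stalls $g$ while $t$ grows, and the budget $nb(1-\epsilon)$ leaves essentially no slack over the balanced geometric baseline $nb$; your accounting treats over-service as free once paid for in work, so it cannot detect this. The first part has a second, independent gap: your selection of a ``near-balanced'' boundary with $g(t)\approx t/n$ (where $d_j\to 0$) does not follow from finiteness of the worst-case ratio --- a schedule may keep one designated laggard at exactly $\ell_{i,t}=t/(n(1+b)-\epsilon)$ for all $t$, so the minimum never approaches the average and $d_j$ never vanishes.

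Both difficulties are avoided in the paper by measuring starvation against \emph{elapsed time} rather than against the min-level. For the asymptotic claim the paper partitions the timeline into fixed phases $[\,n\sum_{j=0}^{i-1}b^j,\ n\sum_{j=0}^{i}b^j\,)$ and shows that, for all $i$ beyond some $i_0$, a phase with fewer than $n$ preemptions leaves some problem untouched throughout it; interrupting just before the phase ends then forces the ratio above $nb(1-\epsilon')$, with no deficiency bookkeeping needed because the starved problem's aggregate is bounded by the phase's start time, not by a threshold it might have overshot earlier. For the worst-case claim the paper does not need any ``every large $t$'' machinery at all: it normalizes the schedule so that its first phase consists of $n$ unit jobs (costing at most $n$ extra preemptions and not worsening the ratio), and then exhibits a \emph{single} witness $t$ just below $n(b+1)$, at which too few preemptions would leave a problem with aggregate $1$ unserved on $[n,(b+1)n]$, driving the ratio to $n(b+1)$. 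If you want to salvage your epoch framework, you would need to re-index epochs by time and fold the cost of creating over-service into the ratio constraint itself; as written, the charging argument at your acknowledged ``main obstacle'' does not close.
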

\begin{proof}
For the first part of the theorem, suppose, that a strategy $S$ has (worst-case) acceleration ratio $\beta=n(b+1)-\epsilon$,
and incurs fewer than $n \log_b \left(\frac{t(b-1)}{n}+1 \right) -n$ preemptions for any $t$. 
We will first show that there exists a strategy $S'$ with the following properties: i) at its first phase, $S$ executes $n$
jobs, all of the same unit length, for each of of the $n$ problems; ii) the number of preemptions of $S'$ at time $t$
does not exceed the number of preemptions of $S$ by more than $n$; and iii) $S'$ has no worse acceleration ratio than $S$.
To see this, we will use the canonical assumption that interruptions occur only after at least a job per problem has been
executed (see~\cite{steins}). Let $l_1,l_2, \ldots ,l_n$ denote the aggregate lengths of jobs in this first phase, in 
non-decreasing order; here $l_i>1$, for all $i$ (since we may assume, from normalization, that the smallest job length is equal to 1).
Consider then a strategy $S'$ which first schedules $n$ unit jobs, one per problem, followed by $n$ more jobs (again one per problem)
of lengths $l_1-1, \ldots l_n-1$. From that point onwards, $S'$ is precisely $S$. In other words, $S'$ is derived by substituting the
initial phase of $S$ by two sub-phases, as defined above. It is easy to see that $S'$ has no worse acceleration ratio than $S$.
Moreover, since $S'$ introduces at most $n$ new job executions, in comparison to $S$.  Therefore, $S'$ is such that at time $t$ at most 
$n \log_b \left(\frac{t(b-1)}{n}+1 \right)$ preemptions are incurred. 

Let $t$ be arbitrarily close to, but smaller than $nl(b+1)$. 
Then from the assumption, $S'$ must incur fewer than $n \log_b\left( \frac{n(b^2-1)}{n}+1\right)$ preemptions by time $t$. This would imply
that there is a problem for which $S'$ does not schedule a job within the interval $[n, (b+1)n]$, from which it follows that the acceleration
ratio of $S'$ is at least $n(b+1)$, since at time $t$ there is a problem that has been executed to aggregate length equal to 1, which 
is a contradiction.

For the second part of the theorem, fix a strategy $S$ of asymptotic acceleration ratio $\beta=nb(1-\epsilon)$. 
Consider a partition of the timeline in phases, 
such that the $i$-th phase ($i \geq 0$) spans the interval $[n\sum_{j=0}^{i-1}b^j, n\sum_{j=0}^i b^j)$, and thus has length $nb^i$.
We will show that there exists $i_0>0$ such that for all $i \geq i_0$, $S$ must incur at least $n$ preemptions in its $i$-th phase. 
Since the geometric strategy with base $b$ incurs exactly $n$ preemptions in this interval, for all $i$, 
this will imply that we can partition the timeline $t \geq i_0$ in intervals with the property that in each interval, 
$S$ incurs at least as many preemptions as the geometric strategy, which suffices to prove the result. 

Suppose, by way of contradiction, that $S$ incurred at most $n-1$ preemptions within $T=[n\sum_{j=0}^{i-1} b^j, n\sum_{j=0}^{i} b^j]$.
Therefore, there exists at least one problem $p$ with no execution in $T$. Consider an interruption at time 
$t=n\sum_{j=0}^{i} b^j -\delta$, for arbitrarily small $\delta>0$.  Thus, the aggregate job length for $p$ by time 
$t$ in $S$ is $\ell_{p,t}\leq n \sum_{j=0}^{i-1} b^j=n\frac{b^{i}-1}{b-1}$. Since $S$ has asymptotic 
acceleration ratio $\beta$, there must exist $i_0$ and $ \epsilon'$ with $0<\epsilon'<\epsilon$ such that for all $i \geq i_0$,
$
n\frac{b^{i+1}-1}{b-1} -\delta \leq nb(1-\epsilon') \frac{b^{i}-1}{b-1},
$
which it turn implies that 
$\epsilon' \frac{b^{i}-b}{b-1} \leq \delta$ for all $i>i_0$. This is a contradiction, since $\epsilon'$ 
depends only on $i_0$, and $\delta$ can be arbitrarily small.  
\end{proof}
Next, we consider ray-searching and the trade-offs between the competitive ratio and the number of turns. 
Recall that in the model we study, the searcher incurs cost only upon visiting newly explored territory. 
In particular, we define the geometric search strategy as a round-robin search of the rays; more precisely,
in the $i$-th phase of the strategy each ray is searched up to distance $b^i$ from the origin. 
\begin{theorem}
The geometric search strategy has (worst-case) competitive ratio $(b+1)m$, asymptotic competitive ratio $bm$, 
and is such that if the searcher had incurred cost $d$, the overall number of turns is at most
$m \log_b \left(\frac{d(b-1)}{m}+1 \right)+m$.
Moreover, for any search strategy with (worst-case) acceleration ratio $m(1+b)-\epsilon$ for any $b>1$, and constant $\epsilon>0$, 
there exists a target placement such that the searcher incurs cost $d$, and the number of  
number of turns is at least $m \log_b \left(\frac{d(b-1)}{m}+1 \right) -m$. 
Last, any strategy with asymptotic competitive ratio $mb(1-\epsilon)$, for any constant $\epsilon>0$, makes
$m \log_b \left(\frac{d(b-1)}{m}+1 \right) -o\left(m \log_b \left(\frac{d(b-1)}{m}+1 \right)\right)$ 
turns for search cost $d>d_0$, for some $d_0$.
\label{thm:rays.preemption}
\end{theorem}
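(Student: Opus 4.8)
The plan is to exploit the tight structural correspondence between this preemptive ray-searching problem and the preemptive scheduling problem analyzed in Lemma~\ref{lemma:preemptions.upper} and Theorem~\ref{thm:preemptions.lower}. The dictionary is: a ray plays the role of a problem, the distance explored on a ray plays the role of the aggregate job length for a problem, the accumulated search cost $d$ plays the role of the interruption time $t$, and a turn of the searcher plays the role of a preemption. Under this translation the three claims of the theorem are precisely the images of the corresponding scheduling statements (with $n$ replaced by $m$ and $t$ by $d$), so the strategy is to re-run each of those arguments while verifying that the cost accounting of the searcher reproduces the time accounting of the scheduler. The invariant that makes the dictionary exact is that the total search cost equals the sum over all rays of the distances explored, mirroring $t=\sum_i \ell_{i,t}$.

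For the upper bound I would first evaluate the competitive ratio of the geometric strategy. As in the scheduling case, the worst case is a target planted so that it is detected at the very end of the phase $i$ in which its ray is first searched to distance at least $d$, with that ray searched last within the phase; at that instant every other ray has been explored to distance $b^i$, so the accrued cost over the target distance yields a ratio of at most $(b+1)m$, tending to $bm$ as $i\to\infty$, exactly as the acceleration ratio tended from $n(b+1)$ to $nb$. For the number of turns I would relate $d$ to the current phase index $i$ through the cumulative-cost expression, so that $d/m \geq \frac{b^i-1}{b-1}$ and hence $i \leq \log_b\!\left(\frac{d(b-1)}{m}+1\right)$, and then charge at most $m$ turns per phase, giving the stated bound $m\log_b\!\left(\frac{d(b-1)}{m}+1\right)+m$.

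For the two lower bounds I would transcribe the two parts of Theorem~\ref{thm:preemptions.lower}. In the worst-case part I first normalize any candidate strategy so that its initial phase searches each ray to a common unit distance, at the cost of at most $m$ extra turns and no increase in the competitive ratio, exactly as the scheduling proof replaced the first phase by two unit sub-phases. I then argue that if the strategy used too few turns it would fail to advance the frontier of some ray throughout a cost-interval $[m,(b+1)m]$; planting the target just beyond that stalled frontier forces a competitive ratio of at least $m(b+1)$, a contradiction. For the asymptotic part I partition the cost axis into geometric phases whose $i$-th piece has total length $mb^i$ and show that, for all large $i$, at least $m$ turns must fall inside each piece, for otherwise some ray is left untouched over the whole piece and a target placed there yields a ratio exceeding $mb(1-\epsilon)$; summing over pieces reproduces the geometric strategy's turn count up to lower-order terms.

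The main obstacle, and the place where the ray-searching argument genuinely departs from the scheduling one, is that the quantity being compared to the target distance is the cost \emph{at the moment of detection}, which depends on the searcher's trajectory rather than on a discrete set of completed jobs, and the adversary additionally controls the continuous target distance $\delta$ rather than merely a problem index. Consequently the step ``a problem is neglected over a time interval'' must be replaced by ``a ray's frontier fails to advance over a cost interval,'' and one must then exhibit an explicit $\delta$ just past that frontier so that the cost accrued up to detection, divided by $\delta$, exceeds the target ratio. Making this substitution precise, together with reconciling the cost of re-traversed versus newly explored territory so that the cumulative cost matches the scheduler's and the additive $\pm m$ turn terms line up at the phase boundaries, is the delicate part of the proof.
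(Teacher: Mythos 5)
Your proposal is correct and follows essentially the same route as the paper: the paper's own proof is precisely a transcription of Lemma~\ref{lemma:preemptions.upper} and Theorem~\ref{thm:preemptions.lower} under your dictionary (turns as preemptions, search cost as interruption time, explored depth as aggregate job length), including the worst-case target placed just past the end-of-phase turn point with cost $d=m\frac{b^{i+1}-1}{b-1}$ against distance $\frac{b^i-1}{b-1}$, the normalized strategy $S'$ whose initial phase searches each ray to unit depth at the price of at most $m$ extra turns, and the phase-partition adversary showing $m$ turns per interval $[m\sum_{j=0}^{i-1}b^j, m\sum_{j=0}^{i}b^j]$. The ``delicate point'' you flag --- exhibiting an explicit target just beyond a stalled ray's frontier --- is exactly how the paper closes the asymptotic argument, placing the target at distance $m\frac{b^{i}-1}{b-1}+\epsilon$ on the unsearched ray $r$.
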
  
\begin{proof}
The proof follows by arguments very similar to the proofs of Lemma~\ref{lemma:preemptions.upper} and
Theorem~\ref{thm:preemptions.lower}. 
Concerning the performance of geometric search, 
let $0, \ldots ,m-1$ denote the rays visited (in round-robin
order) during each phase. We note that the worst-case placement of the target is attained at points right after the 
turn point of the searcher in the end of phase $i$, 
and in particular after the searcher has incurred cost $d=m\frac{b^{i+1}-1}{b-1}$, whereas the distance of the target from the origin is
equal to $\sum_{j=0}^{i-1}b^j=\frac{b^i-1}{b-1}$. The bounds on the competitive ratio and the number of turns follow
then by the arguments in Lemma~\ref{lemma:preemptions.upper}. 

The trade-offs between the competitive ratio and the number of turns follow from ideas very similar to Theorem~\ref{thm:preemptions.lower}.
More precisely, suppose that a strategy $S$ has (worst-case) competitive ratio $\alpha=m(b+1)-\epsilon$,
and incurs fewer than $m \log_b \left(\frac{d(b-1)}{m}+1 \right) -m$ turns, where $d$ is the search cost. 
We can show that there exists a strategy $S'$ of competitive ratio at most $\alpha$, with at most
$m \log_b \left(\frac{d(b-1)}{m}+1 \right)$ turns for some $d$, and is such that in its initial phase, each 
ray is searched up to distance 1 from the origin. We then use strategy $S'$ to derive a contradiction 
(this construction is as in the proof of Theorem~\ref{thm:preemptions.lower}).

Last, we can show the claimed trade-off between competitive ratio and the number of turns.
Fix a strategy $S$ of asymptotic competitive ratio $\alpha=mb(1-\epsilon)$. 
Consider a partition of the timeline in phases, such that the $i$-th phase ($i \geq 0$) spans the interval 
$[m\sum_{j=0}^{i-1}b^j, m\sum_{j=0}^i b^j)$, and thus has length $mb^i$. Since the searcher has unit speed, we obtain the same partition
concerning the cost incurred by the searcher. 
We will show that there exists $i_0>0$ such that for all $i \geq i_0$, $S$ must make at least $m$ turns in its $i$-th phase. 
Since the geometric strategy with base $b$ makes exactly $m$ turns in this interval, for all $i$, 
this will imply that we can partition the cost incurred by the searcher $d \geq d_0$ in intervals with the property that in each interval, 
$S$ incurs at least as many turns as the geometric strategy, which suffices to prove the result. 

Suppose, by way of contradiction, that $S$ made at most $m-1$ turns within $D=[m\sum_{j=0}^{i-1} b^j, m\sum_{j=0}^{i} b^j]$.
Therefore, there exists at least one ray $r$ which was not searched in $D$. This implies that at time
$t=m\sum_{j=0}^{i} b^j -\delta$, for arbitrarily small $\delta>0$, there is a ray that has not been searched to depth more than 
$m \sum_{j=0}^{i-1} b^j=m\frac{b^{i}-1}{b-1}$. The remainder of the proof follows precisely as the proof of 
Theorem~\ref{thm:preemptions.lower}, by considering a target on ray $r$ placed at distance $m\frac{b^{i}-1}{b-1}+\epsilon$ from the origin. 
\end{proof}

\subsection{Trade offs in the standard model}
\label{subsec:standard}
The ideas of Section~\ref{subsec:preemptive} can also be applied in the standard
model. In this setting, however, exponential strategies are a more suitable candidate. 
\begin{theorem}
\noindent
For contract scheduling, the exponential strategy with base $b$ has acceleration ratio $\frac{b^{n+1}}{b-1}$,
and schedules at most $\log_b (t(b-1)+1)+1$ contracts by $t$. Moreover, any strategy with acceleration
ratio at most $\frac{b^{n+1}}{b-1}-\epsilon$ for $b>1$, and any $\epsilon>0$ must schedule at least 
$\log_b (t(b-1)+1)-o(\log_b (t(b-1)+1))$ contracts by $t$, for all $t \geq t_0$.
\label{thm:scheduling.standard}
\end{theorem}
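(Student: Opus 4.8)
The plan is to handle the three assertions in turn, treating the two upper bounds as routine and reserving the effort for the lower bound. For the acceleration ratio of the exponential strategy I would repeat the standard worst-case-interruption analysis already used for~(\ref{eq:prelim.optimal.ar}) (see~\cite{ZCC.1999.realtime}), but \emph{without} optimizing over $b$: the ratio is maximized at interruptions $t$ arbitrarily close to the completion of some contract, say contract $i+n$, so that $t=\frac{b^{i+n+1}-1}{b-1}$, while the problem with the stalest completed contract still has a contract of length $b^{i}$ available. This gives $\beta\le \frac{b^{i+n+1}-1}{b^{i}(b-1)}\le \frac{b^{n+1}}{b-1}$, as claimed. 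For the count of contracts, if $N$ contracts have been started by time $t$ then their total length is at least $\sum_{j=0}^{N-2}b^{j}=\frac{b^{N-1}-1}{b-1}\le t$; solving for $N$ yields $N\le \log_b\!\big(t(b-1)+1\big)+1$.

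The lower bound is where the work lies, and I would mirror the structure of the second part of Theorem~\ref{thm:preemptions.lower}. First I would perform a normalization step, using the canonical assumption of~\cite{steins} that interruptions occur only after each problem has received at least one contract, to reduce to schedules whose initial behaviour is controlled; any surgery here adds only $O(1)$ contracts and does not increase the acceleration ratio, and is harmlessly absorbed into the $o(\cdot)$ term. Next I would partition the timeline into phases, with phase $i$ spanning $\big[\frac{b^{ni}-1}{b-1},\frac{b^{n(i+1)}-1}{b-1}\big)$, so that the exponential benchmark schedules exactly $n$ contracts per phase and the number of phases completed by time $t$ is $\tfrac{1}{n}\log_b\!\big(t(b-1)+1\big)$. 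The goal is to show that for all $i\ge i_0$ any strategy $S$ of acceleration ratio at most $\frac{b^{n+1}}{b-1}-\epsilon$ must schedule at least $n$ contracts inside phase $i$; summing over phases then gives the stated bound, with the startup index $i_0$ contributing only a lower-order correction.

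The core of that per-phase claim is a contradiction argument: if $S$ scheduled at most $n-1$ contracts in phase $i$, then some problem $p$ would complete no contract during the phase, so at an interruption just before the end of phase $i$ the quantity $\ell_{p,t}$ would be the length of a $p$-contract finished before the phase began. The main obstacle, and the step I expect to be delicate, is bounding this length tightly enough. Charging against a single interruption only shows $\ell_{p,t}\le \frac{b^{ni}-1}{b-1}$, which yields the weaker threshold $\beta\ge b^{n}$ and fails to contradict the hypothesis, since $b^{n}<\frac{b^{n+1}}{b-1}$. To recover the correct constant $\frac{b^{n+1}}{b-1}$ I would invoke the acceleration-ratio guarantee \emph{globally}, at interruptions spread across the earlier phases as well: a long contract for $p$ starves every other problem during its execution, so the requirement that all $n$ problems stay covered at \emph{every} earlier interruption caps how much processor time $p$ could have accumulated, ruling out the ``hog all earlier time'' escape that defeats the naive estimate. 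Making this coupling quantitative---so that the bound on $\ell_{p,t}$ improves from the elapsed time to the sharper value forced by the benchmark's geometric spacing---is the technical heart; once it is in place, the contradiction with $\beta\le\frac{b^{n+1}}{b-1}-\epsilon$ follows exactly as in Theorem~\ref{thm:preemptions.lower}.
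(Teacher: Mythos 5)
Your two upper-bound arguments are fine and essentially coincide with the paper's (the count via $\sum_{j=0}^{N-2}b^j\le t$ is exactly the paper's computation). The genuine gap is in the lower bound: the per-phase claim---that a strategy with acceleration ratio at most $\frac{b^{n+1}}{b-1}-\epsilon$ must schedule $n$ contracts in each phase $\bigl[\frac{b^{ni}-1}{b-1},\frac{b^{n(i+1)}-1}{b-1}\bigr)$---\emph{is} the entire content of the theorem, and you leave it unproven. You correctly diagnose that with phases of $n$ benchmark slots the single-interruption charge only forces $\beta\ge b^{n}$, which never exceeds $\frac{b^{n+1}}{b-1}$ (their ratio is $\frac{b-1}{b}<1$ for all $b>1$), but the proposed repair---``invoke the acceleration-ratio guarantee globally at earlier interruptions''---is a hope, not an argument: you never specify which earlier interruptions are charged, derive no inequality from them, and do not explain how capping the processor time accumulated by $p$ would sharpen the bound on $\ell_{p,t}$, which is the length of $p$'s best \emph{completed} contract and is already capped by the phase's left endpoint $\frac{b^{ni}-1}{b-1}$ no matter how earlier time was shared. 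To reach the stated constant you would need to push that cap down to roughly $b^{ni-1}$, a gain of a factor $\frac{b}{b-1}$ (unbounded as $b\to1$) for which your sketch supplies no mechanism. Since you yourself label this step ``the technical heart,'' the proposal does not establish the lower bound.

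For comparison, the paper closes the same hole not by any global coupling but by widening the window: it shows that a schedule with ratio $\frac{b^{n+1}}{b-1}$ must place at least $n+1$ contracts in $T=\bigl[\sum_{j=0}^{i-n-1}b^j,\ \sum_{j=0}^{i}b^j\bigr]$ for every $i$---a span of $n+1$ benchmark slots rather than your $n$---since otherwise some problem $p$ completes nothing inside $T$, whence $\ell_{p,t}\le\frac{b^{i-n}-1}{b-1}$ at an interruption $t=\frac{b^{i+1}-1}{b-1}-\delta$, forcing a ratio of roughly $b^{n+1}$; disjoint consecutive windows then partition the timeline, with the exponential benchmark scheduling exactly $n+1$ contracts in each. (Your suspicion about purely local charges is not groundless: $b^{n+1}>\frac{b^{n+1}}{b-1}$ only when $b>2$, and indeed the paper's final step $\delta\ge\frac{b^{n+1}}{(b-1)^2}-\frac{1}{b-1}$ silently discards a term $\frac{b^{i+1}(b-2)}{(b-1)^2}$ that is negative for $b<2$; still, the paper's charge is a complete argument in the regime where that algebra is valid, whereas yours stops at $b^{n}$ for every $b$.) Finally, your opening normalization via the canonical assumption of~\cite{steins}, borrowed from Theorem~\ref{thm:preemptions.lower}, is harmless but unnecessary here: the paper applies the window-counting directly, absorbing the finitely many startup windows into the $o(\cdot)$ term exactly as you intended with your $i_0$.
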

\begin{proof}
It is known that any exponential strategy with base $b$ has acceleration ratio $\frac{b^{n+1}}{b-1}$. 
If an interruption $t$ occurs during the $i$-th execution of a contract, then $t \geq \sum_{j=0}^{i-1}b^j=\frac{b^i-1}{b-1}$.
Thus, $i \leq \log_b(t(b-1)+1)$, and since the number of contracts by time $t$ is at most $i+1$, we obtain the desired upper bound. 

For the lower bound, we will show a result even stronger than claimed in the statement of the theorem. More precisely, we 
will show that any schedule $S$ with acceleration ratio $\beta=\frac{b^{n+1}}{b-1}$ 
must schedule at least $n+1$ contracts in the timespan $T=[\sum_{j=0}^{i-n-1} b^j, \sum_{j=0}^{i} b^j]$, for all $i$
(we will thus allow even $\epsilon=0$).
Since the exponential strategy with base $b$ schedules exactly $n+1$ contracts in this interval, this will imply that 
we can partition the timeline in intervals with the property that in each interval, $S$ schedules at least as 
many contracts as the exponential strategy, which suffices to prove the result. 

Suppose, by way of contradiction, that $S$ scheduled at most $n$ contracts in the timespan $T=[\sum_{j=0}^{i-n-1} b^j, \sum_{j=0}^{i} b^j]$. 
Therefore, at most one contract for each problem has been executed in this interval. 
Consider an interruption at time 
$t=\sum_{j=0}^{i} b^j -\delta$, for arbitrarily small $\delta>0$. From the assumption, there is at least one problem $p$
for which $S$ did not {\em complete} any contract in the time span $[\sum_{j=0}^{i-n-1} b^j, \sum_{j=0}^{i} b^j-\delta]$.
Thus, the largest contract for $p$ that has completed by time $t$ in $S$
can have length at most $l=\sum_{j=0}^{i-n-1} b^j=\frac{b^{i-n}-1}{b-1}$. Since $S$ has acceleration ratio $\beta$, it must be that
$t \leq \beta l$, which gives
\[
\frac{b^{i+1}-1}{b-1} -\delta \leq \frac{b^{n+1}}{b-1} \frac{b^{i-n}-1}{b-1},
\]
which it turn implies that $\delta \geq \frac{b^{n+1}}{(b-1)^2}-\frac{1}{b-1}$. 
This is a contradiction, since $\delta$ can be chosen to be arbitrarily small, and in particular, smaller than  
$\frac{b^{n+1}}{(b-1)^2}-\frac{1}{b-1}$. 
\end{proof}
For ray searching in the standard model, we can obtain similar results. 
Recall that in this model, the searcher incurs cost at all times it moves, 
regardless of whether it explores new territory. 
We can prove the following theorem along the lines of the proof of Theorem~\ref{thm:scheduling.standard}.

\begin{theorem}
For ray searching in the standard model, the exponential strategy with base $b$ has competitive ratio $1+2\frac{b^{m}-1}{b-1}$, and for any distance 
$d$ traversed by the searcher it makes at most  $\log_b (d(b-1)+1)+1$ turns.  
Moreover, any strategy with competitive ratio at most  $1+2\frac{b^{m}-1}{b-1}$, for any $b>1$ 
incurs at least $\log_b (d(b-1)+1)+1-o(\log_b (d(b-1)+1)$ turns. 
\label{thm:searching.standard}
\end{theorem}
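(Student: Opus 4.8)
The plan is to follow the template of the proof of Theorem~\ref{thm:scheduling.standard} line for line, treating the $m$ rays as the analogue of the $n$ problems and the total distance $d$ traversed by the searcher as the analogue of the interruption time $t$. The competitive-ratio bound itself requires no new argument: the round-robin exponential strategy with base $b$ is the classical star-search strategy, whose worst-case ratio is $1+2\frac{b^m-1}{b-1}$ by~(\ref{eq:prelim.optimal.cr}) and the surrounding discussion, so I would simply invoke that fact.

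For the upper bound on the number of turns, I would first locate the iteration during which the searcher finds itself after traversing total distance $d$. Each iteration $i$ reaches depth $b^i$ and returns, so the cumulative cost incurred up to the start of iteration $i$ is a geometric sum of the $b^j$; inverting this relation bounds the current iteration index by $i\le\log_b\!\big(d(b-1)+1\big)$ (the constant from the out-and-back traversal only making the left-hand side smaller, so the stated bound is safe), and charging one turn per iteration gives at most $i+1\le\log_b\!\big(d(b-1)+1\big)+1$ turns. The only delicate point here is the bookkeeping of turns in the standard model, where every motion is charged and the searcher physically passes through the origin between consecutive rays; I would fix the convention so that the count stays linear in $i$ rather than $2i$.

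The substance lies in the lower bound, which I would establish in the strengthened, per-interval form. I would partition the cost axis (equivalently the timeline, since the searcher has unit speed) into phases, each phase being one full round-robin cycle of $m$ consecutive iterations, over which the geometric strategy advances each of the $m$ rays exactly once, i.e.\ makes exactly $m$ turns. I would then argue by contradiction that any strategy $S$ with competitive ratio at most $1+2\frac{b^m-1}{b-1}$ must itself make at least $m$ turns within each such phase: if $S$ made at most $m-1$ turns, some ray $r$ would not be advanced during the phase, so the depth to which $r$ has been explored remains at most the value $\ell$ attained at the phase's start; placing the target on $r$ at distance $\ell+\epsilon$ then forces the searcher's accumulated cost to exceed $\left(1+2\frac{b^m-1}{b-1}\right)(\ell+\epsilon)$, contradicting the assumed ratio. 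This is precisely the target-placement argument already used in the proof of Theorem~\ref{thm:rays.preemption}. Summing the per-phase lower bound of $m$ turns over all phases up to cost $d$ and comparing against the geometric strategy then yields the claimed $\log_b\!\big(d(b-1)+1\big)+1-o\!\big(\log_b(d(b-1)+1)\big)$, the lower-order loss coming from the single incomplete phase at the boundary.

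The main obstacle I anticipate is making the lower-bound target-placement estimate tight enough to rule out every competitive ratio up to the threshold $1+2\frac{b^m-1}{b-1}$: one must verify that a single un-advanced ray, with the target just past its last explored point, genuinely drives the ratio to that value, which requires accounting precisely for the $2\sum_j b^j$ cost of the round trips rather than the single-pass cost that sufficed in the preemptive model of Theorem~\ref{thm:rays.preemption}. Getting this constant right, and aligning the phase boundaries so that the suppressed term is truly $o\!\big(\log_b(d(b-1)+1)\big)$, is where the care is needed; the remainder transcribes the contract-scheduling argument essentially verbatim.
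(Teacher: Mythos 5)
Your proposal takes essentially the same route as the paper's proof: the paper likewise simply cites the known competitive ratio of the exponential strategy, reuses the counting argument of Theorem~\ref{thm:scheduling.standard} verbatim for the turn bound, and proves the lower bound by showing that any strategy with the stated ratio must make at least $m$ turns in each interval $\left[\sum_{j=0}^{i-m}b^j,\sum_{j=0}^{i}b^j\right]$, since otherwise some ray $r$ is unsearched there and an adversarial target placed at distance $\rho+\delta$ just beyond its explored depth $\rho$ yields a contradiction. The delicate estimate you flag at the end is precisely the paper's one ray-specific ingredient: it bounds $\rho \leq \frac{1}{2}\sum_{j=0}^{i-m}b^j$ (the factor $\frac{1}{2}$ coming from the bidirectional traversal you mention), notes the target is found no earlier than $\frac{b^{i+1}-1}{b-1}+\rho+\delta$, and derives $2\frac{b^m}{b-1}\delta \geq \frac{b^m}{(b-1)^2}-\frac{1}{b-1}$, which fails for $\delta$ sufficiently small.
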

\begin{proof}
It is known that the exponential strategy with base $b$ has competitive ratio $1+2\frac{b^{m}-1}{b-1}$. For given distance $d$
traversed by the searcher, the number of turns is computed using precisely the same argument as the number of contract executions of the 
exponential strategy in the proof of Theorem~\ref{thm:scheduling.standard}. Likewise, for the lower bound, 
we will show that any strategy $S$ with competitive ratio $\alpha=1+2\frac{b^{m}}{b-1}$ must make at least $m$ 
turns in the timespan $[\sum_{j=0}^{i-m} b^j, \sum_{j=0}^{i} b^j]$
(recall that since the searcher has unit speed, time coincides with the distance traversed by the searcher).
Since the exponential strategy with base $b$
searches exactly $m$ rays in this interval, this will imply that we can partition the timeline (and thus the distances traversed by the searcher)
in intervals with the property that in each interval, $S$ searches at least as many rays as the exponential 
strategy, which suffices to prove the result. 

Suppose, by way of contradiction, that $S$ made fewer than $m$ turns in the timespan $T=[\sum_{j=0}^{i-m} b^j, \sum_{j=0}^{i} b^j]$.
Therefore, there exists a ray that has not been searched in $T$, say ray $r$. Let $\rho$ denote the depth at which $r$ has been 
searched up to time $\sum_{j=0}^{i-m} b^j$, and consider a target placement in $r$ at distance $\rho+\delta$, for 
arbitrarily small $\delta$. For this target placement, the competitive ratio is minimized when $\rho$ is maximized; moreover, since
$r$ was not searched in $T$, we obtain that $\rho$ is at most $\frac{1}{2}\sum_{j=0}^{i-m} b^j=\frac{b^{i-m+1}-1}{2(b-1)}$.
Here, the factor $1/2$ is due to the search traversing each ray in both directions (away and towards the origin). 
Note also that the target is discovered, at the earliest, at time $\sum_{j=0}^i b^i+\rho+\delta=\frac{b^{i+1}-1}{b-1}+\rho+\delta$.
Since the strategy has competitive ratio $\alpha=1+2\frac{b^{m}}{b-1}$, it must be that 
\[
\frac{b^{i+1}-1}{b-1}+\rho+\delta \leq (1+2\frac{b^{m}}{b-1}) (\rho+\delta),
\]
from which, after some simplifications, we obtain that 
$2\frac{b^m}{b-1} \delta \geq \frac{b^m}{(b-1)^2}-\frac{1}{b-1}$, which is a contradiction, since 
$\delta$ can be arbitrarily small.
\end{proof}

\section{Conclusion}
\label{sec:conclusion}
In this paper we demonstrated that many variants of searching for 
a target on concurrent rays and scheduling contract algorithms on a single processor
are amenable to a common approach. 
There are some intriguing questions that remain open. 
Can we obtain a $\Theta(m/p)$-competitive algorithm for searching with probabilistic detection? 
We believe that cyclic strategies are not better than $\Theta(m/p^2)$-competitive.
What are the optimal (non-monotone) algorithms for searching/scheduling 
with redundancy? Note that the precise competitive ratio of these problems is open even when $m=2$. As a broader
research direction, it would be very interesting to address searching and scheduling in heterogeneous environments. 
For example, one may consider the setting in which each ray is characterized by its own probability of successful target detection.

\bibliographystyle{named}
\bibliography{targets}

\end{document}